%% file: iclr2027_conference.tex
\documentclass{article} %
\usepackage{iclr2027_conference,times}

\input{math_commands.tex}

\usepackage{lipsum}
\usepackage{wrapfig}
\usepackage{pifont} %
\usepackage{adjustbox}
\usepackage{booktabs}
\usepackage{hyperref}
\usepackage{url}
\usepackage{xcolor}

\usepackage{amsmath,amsfonts,amssymb,dsfont}
\usepackage{amsthm}
\usepackage{aliascnt}
\usepackage{placeins}
\usepackage{float}
\usepackage[skip=4pt]{caption}
\usepackage{enumitem}
\usepackage{thmtools, thm-restate}

\usepackage[capitalize,noabbrev]{cleveref}
\crefname{theorem}{Theorem}{Theorems}
\crefname{lemma}{Lemma}{Lemmas}
\crefname{corollary}{Corollary}{Corollaries}
\crefname{proposition}{Proposition}{Propositions}

\usepackage{mathtools}
\usepackage{xparse}
\DeclarePairedDelimiterX{\fargs}[2]{(}{)}{#1\,\delimsize\|\,#2}

\newcommand{\authsep}{\kern-0.06em,\quad}

\RenewDocumentCommand{\KL}{gg}{%
  D_{\mathrm{KL}}\IfNoValueF{#1}{\fargs{#1}{#2}}%
}
\NewDocumentCommand{\df}{gg}{%
  D_{f}\IfNoValueF{#1}{\fargs{#1}{#2}}%
}
\NewDocumentCommand{\chisq}{gg}{%
  D_{\chi^2}\IfNoValueF{#1}{\fargs{#1}{#2}}%
}
\NewDocumentCommand{\js}{gg}{%
  D_{\mathrm{JS}}\IfNoValueF{#1}{\fargs{#1}{#2}}%
}
\NewDocumentCommand{\TV}{gg}{%
  D_{\mathrm{TV}}\IfNoValueF{#1}{\fargs{#1}{#2}}%
}
\NewDocumentCommand{\Hellinger}{gg}{%
  D_{\mathrm{H}}\IfNoValueF{#1}{\fargs{#1}{#2}}%
}
\newcommand{\sg}{\mathop{\mathrm{sg}}}

\newcommand*\samethanks[1][\value{footnote}]{\footnotemark[#1]}

\newcommand{\hide}[1]{\textcolor{red}{OMITTING TEXT}}

\title{Does self-distillation improve reasoning?}

\author{%
  Jason Ken Adhinarta$^{1}$\thanks{Equal contribution. } \quad
  Raina Wu$^{1}$\samethanks[1] \quad %
  Ashutosh Sharma$^{2}$ \quad 
  Avinash Balakrishnan$^{2}$\\[0.8ex]
  \bfseries Rameswar Panda$^{2}$\quad
  Polina Golland$^{1}$\quad
  Yoon Kim$^{1}$\quad
  Zhang-Wei Hong$^{2}$ \\[1.2ex]
  \normalfont
  $^{1}$Massachusetts Institute of Technology \qquad $^{2}$MIT-IBM Computing Research Lab
}

\iclrfinalcopy %

\ificlrfinal
  \newcommand{\coderepo}{\url{https://github.com/sleepypandu/does-sd-improve-reasoning}}
\else
  \newcommand{\coderepo}{\url{https://anonymous.4open.science/r/does-sd-improve-reasoning/}}
\fi
\begin{document}

\maketitle
\lhead{Preprint}   %

\begin{abstract}

Self-distillation is an emerging approach that trains a model to imitate its own predictions when conditioned on privileged information, such as correct answers or environmental feedback. Recent methods have reported better sample efficiency and generalization than reinforcement learning (RL), but it remains unclear whether these gains extend beyond in-distribution evaluations and whether they stem from privileged distillation. In this paper, we revisit two representative methods, Self-Distillation Policy Optimization (SDPO) and On-Policy Self-Distillation (OPSD), across code, science, tool use, and math reasoning tasks. We find that their reported gains do not consistently generalize to unseen problems and are largely explained by overfitting or transferring existing reasoning behavior, rather than effectively using privileged information. Moreover, self-distillation gradients show weak alignment with policy gradients that directly optimize expected reward, questioning whether they provide an effective direction for policy improvement. Our experiments across Qwen3 models (1.7B–14B) and OLMo-3-7B suggest that effectively leveraging privileged information to improve reasoning remains an open challenge.
Our code is available at \coderepo. %
\end{abstract}

\section{Introduction}

\begin{wrapfigure}[21]{r}{0.30\textwidth}
  \centering
  \vspace{-1em}
  \includegraphics[width=0.95\linewidth]{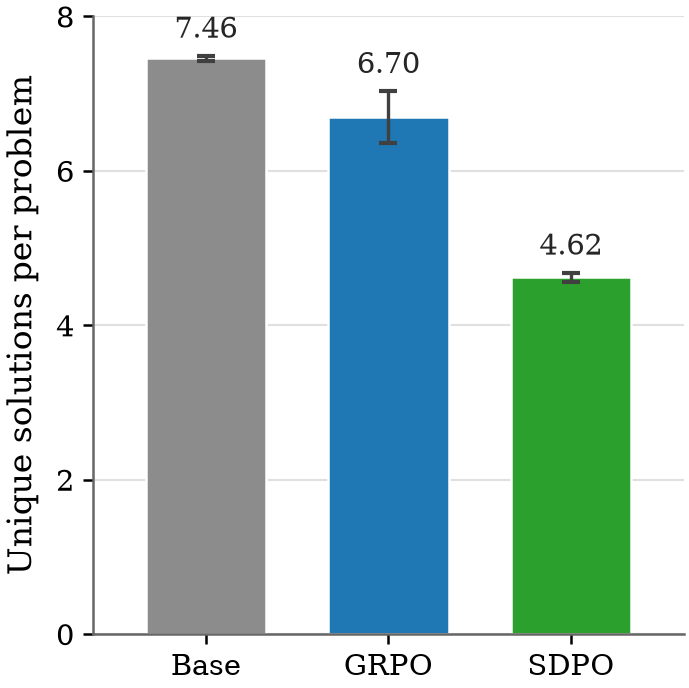}
  \caption{\textbf{SDPO reduces solution diversity on LiveCodeBench v6.} AST-unique solutions per training problem (from $8$ samples) for Qwen3-14B before (Base) and after $200$ steps of GRPO or SDPO (mean $\pm$ std, four seeds). Diversity collapse suggests memorization.}
  \label{fig:sdpolcbunique}
\end{wrapfigure}

Reinforcement learning with verifiable rewards (RLVR) has become a standard approach for finetuning large language models (LLMs) on reasoning tasks such as mathematics and coding \citep{rlvr,deepseekr1}. As responses in these domains are automatically verifiable, scaling RLVR is straightforward. However, typical RLVR pipelines use binary rewards that indicate only whether the final answer is correct or whether all unit tests pass. Such rewards provide no information on how attempted solutions should be revised. Process reward models score individual reasoning steps \citep{mathshepherd,prime}, but require training additional models.

A recently popularized alternative to RLVR is self-distillation, which requires neither an external stronger model nor an explicit reward function. It exploits the fact that reasoning tasks often come with feedback far richer than the binary outcome that RLVR consumes, such as reference solutions, critiques, error messages, or failing unit tests. In self-distillation, the model samples a trace and is trained to match, at each token along that trace, the distribution of a (teacher) copy of itself that additionally conditions on this privileged information \citep{sdpo,sdft,opsd}. The self-conditioned teacher converts rich but unstructured feedback into a dense, per-token training signal.

However, prior work questions the validity of self-distillation objectives as a mechanism to improve the student policy. The self-conditioned teacher's outputs do not express uncertainty through tokens such as ``wait'', ``hmmm'', and ``perhaps,'' so training against it suppresses the model's ability to correct past mistakes as it reasons and shortens the student's traces in ways that hurt performance \citep{kim2026degrade,rlcsd,sdthinkbad}. Moreover, the self-conditioned teacher is unable to consistently distinguish between correct and incorrect student generations \citep{microsoft,pinotvalue}. Together, these results raise a more fundamental question: \textit{does self-distillation extract useful guidance from privileged information, or are its reported gains driven by simpler mechanisms?}

\begin{figure}[t]
\begin{center}
\includegraphics[width=1.0\textwidth]{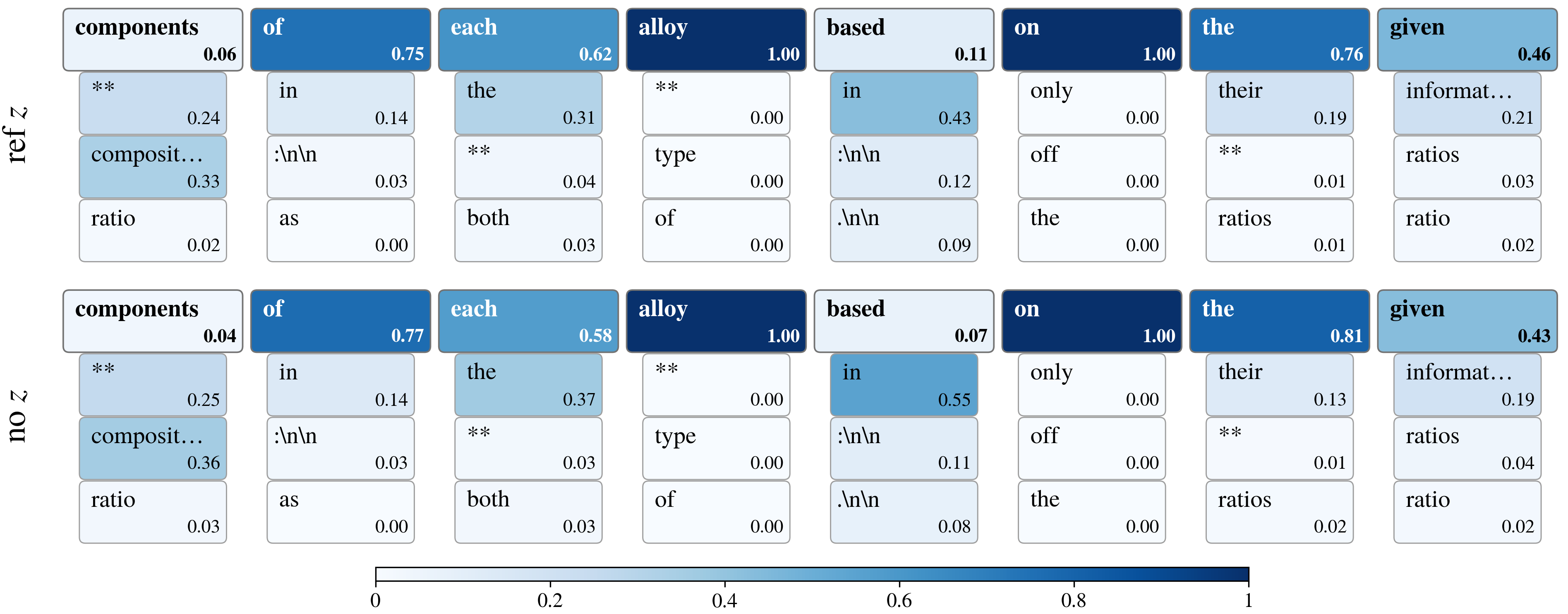}
\end{center}
\caption{\textbf{The reference solution barely changes the self-distillation teacher.} Teacher log-probabilities (thinking on) along a Qwen3-8B thinking-off student rollout on an OpenThoughts problem, with the problem's reference solution in context (ref $z$, top) and with no solution (no $z$, bottom). In each panel, the first row is the sampled token and the three rows below are the student's next three most likely tokens. The two teachers assign very similar probabilities, suggesting that OPSD distillation effects do not significantly depend on $z$ (\cref{subsec:opsd}).}
\label{fig:teaser}
\end{figure}

We revisit this question through careful reproductions of self-distillation policy optimization (SDPO) \citep{sdpo} and on-policy self-distillation (OPSD) \citep{opsd}. We leverage Qwen3 \citep{qwen3technicalreport} and Olmo-3 \citep{olmo2025olmo} models to ensure our observations hold over a range of model sizes and families. We find that the reported gains from privileged-information self-distillation in reasoning domains are weaker than they appear:
\begin{itemize}[leftmargin=*]
\item \textbf{SDPO's coding gains do not transfer to unseen problems} (\cref{subsec:sdpo}).
SDPO's LiveCodeBench v6 \citep{lcb} results train and evaluate on the same set of coding problems. Correcting for this reveals that SDPO overfits to its training problems, reducing the diversity of its solutions (\cref{fig:sdpolcbunique}), and degrades on held-out problems while policy-gradient training on the same verifiable reward generalizes.

\item \textbf{SDPO's gains in science reasoning and tool use do not generalize out of domain} (\cref{subsec:sft}).
Across five science-reasoning \citep{sciknoweval} and tool-use \citep{tooluse} domains, SDPO and GRPO in-domain performances are similar, but in four of five it lowers accuracy on the held-out domains, while GRPO preserves it.

\item \textbf{OPSD transfers thinking-mode behavior, not privileged information} (\cref{subsec:opsd}).
In math reasoning, privileged information barely changes the self-conditioned teacher's token distribution (\cref{fig:teaser}), and distilling thinking-mode behavior has a substantially larger effect than privileged information within OPSD's reported results.

\item \textbf{Self-distillation gradients do not align with policy gradients} (\cref{subsec:orthogonality}). Prior work motivates self-distillation as approximate policy improvement by arguing that its gradient aligns with the policy gradient \citep{sdft}. This alignment matters because a small update improves expected reward only to the extent that it follows the policy gradient (\cref{subsec:equivalence}). We therefore measure how well each method's training gradient aligns with the policy gradient on held-out problems. On Qwen3 models, OPSD and SDPO gradients are consistently less aligned than GRPO gradients computed on the same training problems and often point away from the held-out policy gradient altogether.
\end{itemize}

\section{Related works}

\textbf{Reinforcement learning from verifiable rewards (RLVR).} The goal of reinforcement learning is to maximize the expectation of reward under a policy's responses. For verifiable tasks in math reasoning and coding, such rewards are assigned by verifying the equivalence between expressions or checking whether a proposed program passes all test-cases. RLVR for LLMs is typically done using policy gradient algorithms \citep{reinforce,policygradient}. Although PPO \citep{ppo} was used in foundational work \citep{instructgpt,helpfulharmless}, critic-free methods such as GRPO \citep{grpo} and RLOO \citep{Kool2019Buy4R,ahmadian2024basicsrevisitingreinforcestyle} have become popular due to their simplicity, despite feedback only being given at the trajectory level.

\textbf{On-policy distillation (OPD).} Unlike classic distillation, which tries to match teacher-generated outputs \citep{hinton2015distillingknowledgeneuralnetwork}, on-policy distillation seeks to boost performance by minimizing per-token divergences between student logits $\pi_\theta(\cdot|x,y_{<t})$ and teacher logits $\pi_T(\cdot | x,y_{<t})$ along prefixes sampled from the \textit{student policy} \citep{lu2025onpolicydistillation,deepseekv4,gu2023minillm}. OPD has been shown to be effective in post-training small-models \citep{qwen3technicalreport} and merging the capabilities of multiple domain-specialist expert models \citep{deepseekv4,kimik3,mopd}.

\textbf{Self-distillation and privileged information.} Instead of relying on an external stronger model, self-distillation \citep{sdft,opsd,sdpo} parameterizes the teacher distribution as $\pi_T(\cdot | x,y_{<t})=\pi_{\theta}(\cdot | x, z, y_{<t})$, with privileged information $z$ which is derived from a reference solution or environmental feedback. This purportedly moves the student towards a higher-reward policy, transferring the reference solution's reasoning content.

\textbf{Limitations of self-distillation.} Prior work documents several failure modes of self-distillation: SDPO generalizes poorly out of distribution and in continual learning settings \citep{sdpobadoodifbadteachersignal}, self-distillation degrades thinking models by suppressing backtracking behavior and collapsing response lengths \citep{sdthinkbad,microsoft}, and the self-conditioned teacher fails to distinguish correct from incorrect student solutions \citep{pinotvalue}. Concurrently, \citet{opsdscoop} show that OPSD's gains stem from thinking-mode transfer rather than privileged information. Methods that retain GRPO and use the teacher only to reweight advantages such as \citet{rlcsd} fall outside our scope, as we study the self-distillation objective in isolation. We provide an extended discussion of existing work on the limitations of self-distillation, and of how our paper differs from each, in \cref{app:extended_related_works}.

\section{Preliminaries}
We introduce the baseline training objectives, then show in \cref{subsec:equivalence} that under an optimal teacher all three share the same update direction, which motivates the gradient analysis of \cref{subsec:orthogonality}.

\textbf{Notation.}
Let $\mathcal{X}$, $\mathcal{Y}$, and $\mathcal{Z}$ denote the spaces of problems, solutions, and privileged information respectively. We assume all three are finite, as is the case for language models with a finite vocabulary and a maximum sequence length. For $y \in \mathcal{Y}$, we write $y_t$ for its $t$-th token, $y_{<t}$ for its first $t-1$ tokens, and $|y|$ for its length. For distributions $p, q$ over $\mathcal{Y}$, $\KL{p}{q} = \sum_{y \in \mathcal{Y}} p(y) \log \frac{p(y)}{q(y)}$.

\textbf{Supervised fine-tuning (SFT).}
Given a distribution over problems $x\sim \rho(\cdot)$ and a distribution over solutions $y\sim \pi_T(\cdot | x)$, supervised finetuning optimizes $\pi_{\theta}(\cdot | x)$ to maximize the likelihood of $y$:
\begin{equation}
    \mathcal{L}_{\operatorname{SFT}}(\theta) = -\mathbb{E}_{x\sim \rho(\cdot),y\sim\pi_T(y| x)} \left[\sum_{t=1}^{|y|} \log \pi_{\theta}\!\left(y_t \mid x, y_{<t}\right)\right].
\end{equation}
Note that optimizing the objective above is equivalent to minimizing $\KL$ from the data distribution to the policy, namely, $\mathbb{E}_{x \sim \rho(\cdot)}[\KL{\pi_T(\cdot | x)}{\pi_\theta(\cdot | x)}]$.

\textbf{Group Relative Policy Optimization (GRPO).} Given a distribution over problems $x \sim \rho(\cdot)$ and a reward function $R:\mathcal{X}\times \mathcal{Y}\rightarrow \mathbb{R}$, for a given $x$, GRPO samples a group of $K$ rollouts $y_1,\dots, y_K \sim \pi_{\theta}(\cdot | x)$ from the student and computes their rewards $r_i = R(x,y_i)$. The loss is then computed as
\begin{equation}
    \mathcal{L}_{\operatorname{GRPO}}(\theta) = -\frac{1}{\sum_{i=1}^{K} |y_i|} \sum_{i=1}^{K} \sum_{t=1}^{|y_i|} \min\left( w_{i,t}(\theta)\,\hat{A}_i,\; \operatorname{clip}\left(w_{i,t}(\theta), 1-\varepsilon, 1+\varepsilon\right)\hat{A}_i \right), 
\end{equation} %
where $w_{i,t}(\theta) = \frac{\pi_\theta(y_{i,t} \mid x, y_{i,<t})}{\pi_{\theta_{\mathrm{old}}}(y_{i,t} \mid x, y_{i,<t})}$ is the per-token importance ratio and $\hat{A}_i = \frac{r_i - \bar{r}}{\operatorname{std}(r)}$ is a sequence-level $z$-score normalized advantage shared by all tokens in rollout $i$. GRPO is an instantiation of policy-gradients which aims to maximize $\mathbb{E}_{x\sim \rho(\cdot),y\sim \pi_\theta(\cdot | x)}[R(x,y)]$ by using the following gradient update $\mathbb{E}_{x\sim \rho(\cdot),y\sim \pi_\theta(\cdot | x)}[R(x,y)\nabla_{\theta} \log \pi_\theta(y | x)]$.

\textbf{Privileged information self-distillation.}
Given a distribution over problems $x \sim \rho(\cdot)$ and a distribution over privileged information $z \sim \gamma(\cdot | x)$, we construct a privileged teacher $\pi_T(\cdot | x)=\pi_{\theta}(\cdot |x, z)$ for problems $x$ for some sampled $z$. \mbox{\citet{opsd}} use a fixed reference solution in the dataset as $z$, while \citet{sdpo} use a successful sibling rollout and environmental feedback if available. The loss is then computed as \begin{equation}
    \mathcal{L}_{\operatorname{SD}} = \mathbb{E}_{x\sim \rho(\cdot),y \sim \operatorname{sg}(\pi_{\theta}(\cdot \mid x)),z\sim \gamma(\cdot| x)} \left[\sum_{t=1}^{|y|} D\left( \pi_{\theta}(\cdot \mid x, y_{<t}), \operatorname{sg}(\pi_\theta(\cdot \mid x, y_{<t}, z)) \right)\right],
    \label{eq:sd_estimator}
\end{equation}
where $\operatorname{sg}$ is the stop-grad operator and $D$ represents an information divergence such as forward KL, reverse KL, or Jensen-Shannon divergence. The sampler $y\sim \pi_\theta(\cdot | x)$ is typically not differentiated through due to stability reasons \citep{gkd,sdpo}. Common tweaks include an EMA-teacher \citep{sdpo,sdft}, top-$k$ logit approximations \citep{sdpo}, and per-token clipping \citep{opsd}. Notably, by choosing $D(p, q) = \KL{p}{q}$, we have that $ \mathcal{L}_{\operatorname{SD}}$ is \textit{equal in value} to $ \mathbb{E}_{x \sim \rho(\cdot),\, z \sim \gamma(\cdot \mid x)} \left[\KL{\pi_\theta(\cdot \mid x)}{\pi_T(\cdot \mid x, z)}\right]$. 

The two methods we consider, SDPO \cite{sdpo} and OPSD \cite{opsd}, differ primarily in their instantiation of \cref{eq:sd_estimator} and their use of privileged information: the former uses reverse KL restricted to the student's top $k$ tokens and conditions the teacher on a successful sibling rollout and environment feedback and the latter implements forward KL with per-position full-vocabulary divergence clipping and takes $z$ as a reference solution.

\subsection{Equivalence between Policy Gradient, SFT, and Distillation}\label{subsec:equivalence}

Policy gradient improves the policy by ascending expected reward, whereas self-distillation moves the student towards a teacher, so whether it improves depends on the teacher. Self-distillation implicitly assumes that the self-conditioned teacher approximates the KL-regularized optimal policy relative to the current policy, $\pi^*(y|x) \propto \pi_\theta(y|x)\exp(R(x,y)/\beta)$, which we call the optimal teacher \citep{sdft}. We show that for this teacher, the policy gradient, SFT on samples from $\pi^*$, and on-policy distillation towards $\pi^*$ share the same update direction. Hence, if the assumption holds, self-distillation gradients should align with the policy gradient, which we test in \cref{subsec:orthogonality}.

\begin{restatable}{proposition}{equivprop}\label{prop:equivalence}
Consider the single-prompt setting, with a fixed $x \sim \rho(\cdot)$. Fix any policy $\pi_\theta(\cdot|x)$ over $\mathcal{Y}$, a binary reward function $R:\mathcal{X}\times\mathcal{Y} \rightarrow \{0,1\}$, and $\beta > 0$, and define the optimal teacher $\pi^*(y|x)=\frac{1}{Z(x)} \pi_\theta(y|x) \exp{(R(x,y)/\beta)}$, which maximizes $\mathbb{E}_{y\sim \pi(\cdot|x)}[R(x,y)]-\beta \KL{\pi(\cdot|x)}{\pi_\theta(\cdot|x)}$ over policies $\pi$. Since $x$ is fixed, we suppress it from the notation, writing $\pi_\theta(y)$, $R(y)$, $Z$, and $\pi^*(y)=\frac{1}{Z}\pi_\theta(y)\exp{(R(y)/\beta)}$. Then, we have that:
\begin{align} \label{eq:prop3.1}
\frac{\nabla_\theta \mathbb{E}_{y\sim \pi_\theta}[R(y)]}{\|\nabla_\theta \mathbb{E}_{y\sim \pi_\theta}[R(y)]\|_2}
= -\frac{\nabla_\theta \KL{\sg{(\pi^*)}}{\pi_\theta}}{\|\nabla_\theta \KL{\sg{(\pi^*)}}{\pi_\theta}\|_2}  = -\frac{\nabla_\theta \KL{\pi_\theta}{\sg{(\pi^*)}}}{\|\nabla_\theta \KL{\pi_\theta}{\sg{(\pi^*)}}\|_2},
\end{align}
whenever $\nabla_\theta \mathbb{E}_{y\sim \pi_\theta}[R(y)]$ is non-zero.
\end{restatable}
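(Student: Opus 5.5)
The plan is to compute all three gradients explicitly in terms of the score function $\nabla_\theta \log \pi_\theta(y)$, and show that each is a positive scalar multiple of $g := \nabla_\theta \mathbb{E}_{y\sim\pi_\theta}[R(y)] = \sum_y \pi_\theta(y) R(y)\nabla_\theta\log\pi_\theta(y)$. Normalizing then gives \cref{eq:prop3.1}. The key simplification is that $R$ is binary, so $\exp(R(y)/\beta) = 1 + (e^{1/\beta}-1)R(y)$ is an affine function of $R$. Write $c := e^{1/\beta}-1 > 0$. Then $Z = 1 + c\,p$, where $p := \mathbb{E}_{\pi_\theta}[R]$, and
\begin{equation*}
\pi^*(y) = \pi_\theta(y)\,\frac{1+cR(y)}{1+cp}.
\end{equation*}
Throughout I use the identity $\sum_y \pi_\theta(y)\nabla_\theta\log\pi_\theta(y) = 0$, which holds because $\mathcal{Y}$ is finite and $\pi_\theta$ sums to one.

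\textbf{Forward KL (SFT direction).} Since $\pi^*$ is stop-gradded,
\begin{equation*}
\nabla_\theta \KL{\sg(\pi^*)}{\pi_\theta} = -\sum_y \pi^*(y)\nabla_\theta \log\pi_\theta(y).
\end{equation*}
Substituting the expression for $\pi^*$ and applying the score identity kills the constant term. What remains is $-\frac{c}{1+cp}\, g$, a negative multiple of $g$, which gives the first equality.

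\textbf{Reverse KL (on-policy distillation direction).} Here
\begin{equation*}
\nabla_\theta \KL{\pi_\theta}{\sg(\pi^*)} = \sum_y \pi_\theta(y)\bigl(\log\pi_\theta(y) - \log\pi^*(y) + 1\bigr)\nabla_\theta\log\pi_\theta(y),
\end{equation*}
where $\pi^*$ is held fixed when differentiating. The log-ratio is $\log\pi_\theta(y) - \log\pi^*(y) = \log Z - R(y)/\beta$. The constant terms $\log Z$ and $+1$ vanish by the score identity, leaving $-\frac{1}{\beta}g$. This is again a negative multiple of $g$, which gives the second equality. This step does not need binarity.

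Because the scalars $-\frac{c}{1+cp}$ and $-\frac{1}{\beta}$ are strictly negative, the assumption $g\neq 0$ guarantees that both KL gradients are nonzero. The normalizations are therefore well-defined and the signs match.

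The main obstacle is bookkeeping rather than a conceptual step. One must be careful that the stop-grad means $\pi^*$, and in particular $Z$, is treated as constant in $\theta$. One must also check that the forward-KL case genuinely requires the binary reward: for general $R$ one would get $\mathrm{Cov}_{\pi_\theta}(e^{R/\beta},\cdot)$ rather than a multiple of $g$, so I would rely on the affine identity for $e^{R/\beta}$ there. A minor point is differentiability, since $\pi_\theta(y)>0$ is needed for $\log\pi_\theta$. I would assume this implicitly, or restrict the sums to the support of $\pi_\theta$, noting that $\pi^*$ has the same support.
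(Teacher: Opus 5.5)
Your proposal is correct and follows essentially the same route as the paper: both use the score identity to reduce the reverse-KL gradient to $-\frac{1}{\beta}g$, and both use the binary-reward identity $e^{R/\beta}=1+(e^{1/\beta}-1)R$ to reduce the forward-KL gradient to $-\frac{e^{1/\beta}-1}{Z}\,g$ before normalizing. Your scalar $-\frac{c}{1+cp}$ is the same quantity, since $Z=1+cp$.
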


We defer the proof to \cref{app:propproof}.

Crucially, the self-conditioned teacher must stay close to the student. RL generalizes better than SFT on reasoning tasks \citep{sftmemorizesrlgeneralizes}, plausibly since on-policy RL induces a smaller KL shift from the base model \citep{razor}; a teacher far from $\pi_\theta$ forfeits this small-KL-shift property.

\section{Experimental results} \label{sec:exp}
We now examine whether the reported gains of prior self-distillation works reflect the generalization performance of policy gradients (\cref{subsec:sdpo}), require reasoning behavior (\cref{subsec:sft}), transfer via privileged information (\cref{subsec:opsd}), and have gradients that align with the policy gradient as \cref{prop:equivalence} would require of an optimal teacher (\cref{subsec:orthogonality}).

\subsection{SDPO does not generalize to new coding problems} \label{subsec:sdpo}
\citet{microsoft} suggest that the self-conditioned teacher is heavily biased towards the reference solution given in context in a way that does not generalize, which makes the coding gains reported by SDPO \citep{sdpo} surprising. We observe that SDPO's coding evaluations reuse its training problems, thus not cleanly measuring generalization. We hypothesize that SDPO's coding gains stem from this discrepancy.

SDPO builds its teacher from privileged information available only at training time. On coding tasks, the teacher sees a successful student rollout from the same batch along with environment feedback on execution errors on test cases. They claim that SDPO improves $7.6\%$ over GRPO baselines in accuracy on LiveCodeBench v6 \citep{lcb} evaluations. We construct a clean split of the LCB data, reproduce SDPO overall gains on their data split, and observe that SDPO does not lead to performance gains when evaluated on an unseen test set. %

\begin{figure}[h]
\begin{center}
\includegraphics[width=0.9\textwidth]{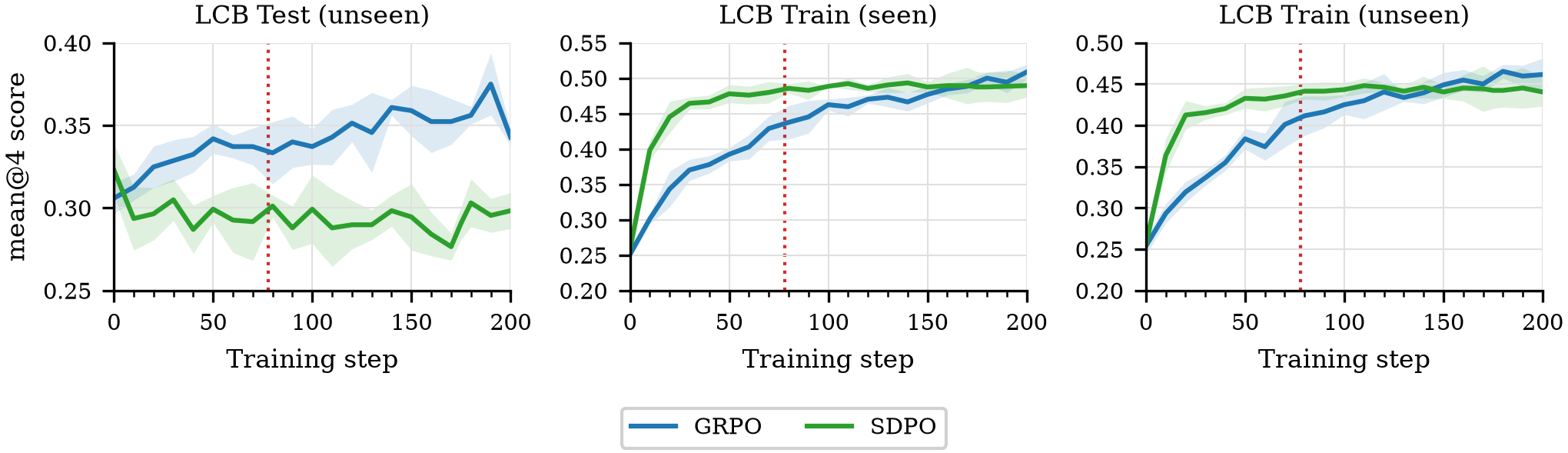}
\end{center}
\caption{Qwen3-8B SDPO vs.\ GRPO on LiveCodeBench v6 across four seeds, scored as the fraction of problems with no failing test cases (mean $\pm$ std). \textsc{train\_seen} and \textsc{train\_unseen} evaluate training problems on seen and on held-out test cases respectively; \textsc{test\_unseen} evaluates held-out problems. The $20000$ training generations reported by \citet{sdpo} is marked in red.}
\label{fig:sdpolcb}
\end{figure}

\textbf{Setup.}
LiveCodeBench v6 pairs each competitive programming problem with a set of test cases. Concretely, the original work trains on \textit{all} problems with half of each problem's test cases removed, and evaluates on the unmodified dataset. To separate memorization from generalization, we instead split the problems into disjoint, equal-size sets $\mathcal{P}_{\mathrm{train}}$ and $\mathcal{P}_{\mathrm{test}}$, giving three evaluation groups: \textsc{train\_seen} (test cases used during training), \textsc{train\_unseen} (held-out test cases of problems in $\mathcal{P}_{\mathrm{train}}$), and \textsc{test\_unseen} (all test cases of problems in $\mathcal{P}_{\mathrm{test}}$).

\textbf{Results.} We train Qwen3-8B, Qwen3-14B, and Olmo-3-7B-Instruct on this split with SDPO and GRPO for 200 steps, four seeds each, otherwise following the setup of \citet{sdpo}; we provide our hyperparameters in \cref{subsec:sdpo_implementation}. \Cref{fig:sdpolcb} shows Qwen3-8B; results for the other two models and tables of accuracies are in \cref{app:sdpolcbcont}.

\textbf{SDPO does not improve on new problems.} Over 200 steps, GRPO improves \textsc{test\_unseen} by $3.7$, $4.9$, and $2.6$ points on Qwen3-8B, Qwen3-14B, and Olmo-3-7B-Instruct, whereas SDPO changes it by $-2.5$, $-0.5$, and $-9.9$ points. The disparity is largest on Olmo-3-7B-Instruct, where SDPO ends $13.1$ points below GRPO ($21.0$ vs.\ $34.1$). In summary, we find that mimicking a privileged teacher does not generalize to new problems.

\textbf{SDPO does generalize to held-out test cases.} SDPO's gains on \textsc{train\_seen} largely carry over to \textsc{train\_unseen} ($+23.1$ vs.\ $+18.4$ on Qwen3-8B, $+22.5$ vs.\ $+17.2$ on Qwen3-14B, and $+18.3$ vs.\ $+16.7$ on Olmo-3-7B-Instruct), retaining 76--91\% of the gain compared with 81--103\% for GRPO.  SDPO therefore overfits at the level of training problems instead of at the level of unit tests, and produces fewer distinct solutions per training problem than GRPO (\cref{fig:sdpolcbunique}). We hypothesize this is because a problem's test cases check the same specification, so a program that passes the seen test cases is usually correct and passes the held-out ones too. This reconciles our results with those of \citet{sdpo}; indeed, evaluating on \textsc{train\_seen} and \textsc{train\_unseen} at 80 steps across all three models, we find that SDPO beats GRPO.

\textbf{On Qwen3 models, GRPO overtakes SDPO on training problems with longer training.} SDPO improves faster early on but then slows, while GRPO keeps improving and overtakes it after $\sim$160 steps on both \textsc{train\_seen} and \textsc{train\_unseen}. At 200 steps on Qwen3-8B, GRPO scores $50.9$ on \textsc{train\_seen} and $46.2$ on \textsc{train\_unseen}, versus $48.9$ and $44.0$ for SDPO; on Qwen3-14B, GRPO scores $54.1$ and $49.2$, versus $51.9$ and $45.4$. The $\sim$80-step comparison of \citet{sdpo} therefore does not reflect performance at convergence---at 80 steps, GRPO is still improving and SDPO's advantage disappears with further training. In Olmo-3-7B-Instruct, SDPO continues to outperform GRPO on training problems.

\subsection{SDPO does not generalize OOD in science reasoning and tool use} \label{subsec:sft}
Given that SDPO overfits on coding problems, we also examine its reported gains in science reasoning and tool use, where \citet{sdpo} find that SDPO outperforms GRPO. We ask how broadly this advantage holds and whether it remains in out-of-domain settings.

\begin{table}[h]
  \centering
  \footnotesize
  \setlength{\tabcolsep}{5pt}
  \caption{Cross-domain validation accuracy (\texttt{avg@16}) of Qwen3-8B after $200$ training steps (mean$\pm$std over four seeds). Rows vary across training domains while columns vary across evaluation domains. OOD change is defined as the change in mean accuracy over all non-training domains.}
  \label{tab:cross_domain}
  \begin{adjustbox}{max width=\textwidth}
  \begin{tabular}{lcccccc}
    \toprule
     & \multicolumn{5}{c}{\textbf{Evaluated on}} & \\
    \cmidrule(lr){2-6}
    \textbf{Trained on} & \textbf{Biology} & \textbf{Chemistry} & \textbf{Material} & \textbf{Physics} & \textbf{Tool use} & \textbf{OOD change} \\
    \midrule
    Base & 30.5$\pm$1.6 & 41.3$\pm$0.6 & 59.0$\pm$0.5 & 59.2$\pm$0.7 & 57.9$\pm$0.4 & \\
    \midrule
    \multicolumn{7}{l}{\textit{SDPO}} \\
    \quad Biology   & \textbf{52.5$\pm$4.9} & 36.7$\pm$1.5 & 32.6$\pm$1.0 & 37.8$\pm$2.9 & 58.9$\pm$0.9 & $-12.8\pm0.8$ \\
    \quad Chemistry & 27.1$\pm$2.1 & \textbf{77.7$\pm$0.6} & 36.0$\pm$1.9 & 46.6$\pm$1.7 & 30.6$\pm$19.6 & $-16.6\pm4.4$ \\
    \quad Material  & 33.0$\pm$2.7 & 47.0$\pm$1.5 & 73.5$\pm$3.3 & 45.3$\pm$8.8 & 35.1$\pm$7.0 & $-7.1\pm4.0$ \\
    \quad Physics   & 33.7$\pm$1.8 & 45.2$\pm$2.0 & 47.2$\pm$4.9 & \textbf{73.1$\pm$4.3} & 12.5$\pm$16.6 & $-12.5\pm4.6$ \\
    \quad Tool use  & 38.1$\pm$2.8 & 46.9$\pm$1.5 & 56.6$\pm$5.3 & 51.7$\pm$2.5 & 53.5$\pm$4.5 & $+0.9\pm1.8$ \\
    \midrule
    \multicolumn{7}{l}{\textit{GRPO}} \\
    \quad Biology   & 51.0$\pm$2.2 & 42.1$\pm$0.8 & 60.5$\pm$2.4 & 59.9$\pm$2.0 & 59.6$\pm$1.1 & $+1.2\pm0.9$ \\
    \quad Chemistry & 31.1$\pm$2.2 & 68.9$\pm$1.1 & 62.3$\pm$1.5 & 60.3$\pm$0.9 & 59.1$\pm$0.8 & $\mathbf{+1.6\pm0.4}$ \\
    \quad Material  & 26.9$\pm$1.3 & 46.1$\pm$1.8 & \textbf{76.2$\pm$1.6} & 61.7$\pm$0.8 & 59.1$\pm$0.4 & $+1.2\pm0.7$ \\
    \quad Physics   & 27.0$\pm$1.7 & 43.6$\pm$1.1 & 63.2$\pm$1.5 & 72.4$\pm$5.2 & 58.4$\pm$0.8 & $+0.9\pm0.1$ \\
    \quad Tool use  & 30.4$\pm$2.2 & 41.9$\pm$0.4 & 54.3$\pm$1.6 & 54.8$\pm$1.8 & \textbf{66.3$\pm$0.7} & $-2.1\pm1.2$ \\
    \bottomrule
  \end{tabular}
  \end{adjustbox}
\end{table}

\textbf{Setup.}
\citet{sdpo} evaluate SDPO on SciKnowEval multiple-choice science questions \citep{sciknoweval} and ToolAlpaca \citep{tooluse}, with teachers conditioned on correct neighboring solutions. Unlike more difficult math and coding tasks, where reaching the answer requires multi-step derivation, answers in these domains can be directly produced by choosing among four options or emitting an API call whose structure is largely determined by the prompt. We train Qwen3-8B with SDPO and GRPO on each domain for $200$ steps over four seeds, following their settings, and evaluate each trained model on the held-out validation set of all five domains; we report our hyperparameters in \cref{subsec:sdpo_implementation} and full training curves in \cref{app:sdpoothercont}.

\textbf{SDPO improves on the training domain faster, but GRPO converges to comparable accuracies.} In biology, chemistry, and physics, SDPO improves in-domain accuracy faster than GRPO early in training, while GRPO improves gradually over all $200$ steps (\cref{fig:crossvalgrid}).
By step $200$, the two methods are within one standard deviation of each other in biology, material, and physics (SDPO vs.\ GRPO: $52.5$ vs.\ $51.0$, $73.5$ vs.\ $76.2$, and $73.1$ vs.\ $72.4$); the only clear differences are chemistry, where SDPO leads ($77.7$ vs.\ $68.9$), and tool use, where GRPO leads ($53.5$ vs.\ $66.3$) (\cref{tab:cross_domain}). Averaged over the five domains, both methods reach similar in-domain accuracy ($66.1$ for SDPO and $67.0$ for GRPO, up from $49.6$ for the base model).

\textbf{SDPO degrades out of domain, while GRPO does not.} Training SDPO on any of the four science domains lowers average out-of-domain accuracy by $7.1$ to $16.6$ points, whereas GRPO slightly improves it ($+0.9$ to $+1.6$; \cref{tab:cross_domain}). Training on tool use is the exception, with SDPO slightly improving out-of-domain accuracy ($+0.9$), with GRPO slightly lowering it ($-2.1$). The drops are largest when evaluating on material, physics, and tool use, where SDPO's accuracy peaks early in training and then declines steadily (\cref{fig:crossvalgrid}). SDPO's in-domain gains thus come at the cost of out-of-domain performance that policy-gradient training generally preserves.

\subsection{OPSD gains depend on thinking mode transfer, not privileged information} \label{subsec:opsd}
OPSD \citep{opsd} reports gains on AIME24, AIME25, and HMMT25 \citep{aime,balunovic2025matharena} by distilling a thinking-on Qwen3 teacher conditioned on a reference solution $z$ into the same model with thinking off (\cref{app:qwenthink}). The teacher thus differs from the student in two ways: it sees $z$, and it thinks. %

We therefore seek to isolate the influence of the content of $z$ with the teacher thinking mode.  If OPSD gains come from conditioning on $z$, replacing it with the reference solution of a different problem or removing it entirely should degrade performance.

\textbf{Setup.} Following \citet{opsd}, we train on OpenThoughts \citep{openthoughts} math problems, each paired with a reference solution. We fix the student to thinking-off and ablate the teacher's thinking mode (on or off) and its privileged information: the problem's own reference solution (ref $z$), no reference solution (no $z$), or the reference solution of a different problem (shuffled $z$). The teacher template is otherwise fixed, so the three differ only in the content of $z$.

\citet{opsd} train Qwen3-1.7B to 8B with OPSD for 100 steps and compare against thinking-on GRPO (500 steps) and SFT (100 steps). To evaluate fully converged performance, we train OPSD and SFT for 200 steps and GRPO, with thinking on and off, for 500 steps, and evaluate on the same datasets with thinking on and off. As \citet{sdft,sdpo} suggest self-distillation works better for larger models, we also test Qwen3-14B and a second model family, Olmo-3-7B-Instruct. As the latter has no thinking toggle, we instead compare two teachers given the same privileged information: the student itself (self-distillation) and Olmo-3-7B-Think, a separately trained thinking model from the same family (cross-model distillation), which cleanly separates privileged information from thinking-mode transfer. We report Qwen3-8B validation results in \cref{tab:opsd8b}, remaining results and statistics in \cref{app:opsdcont}, and hyperparameters in \cref{tab:opsdhyperparams}.

\begin{table}[h]
  \caption{Qwen3-8B training methods evaluated under \texttt{avg@12} with thinking-on and thinking-off evaluation. OPSD think/no think denote whether teacher thinking was on during OPSD training. The subscript on GRPO runs denotes whether thinking was on or off during training. Denote $z$ as the reference solution shown to the teacher. For every evaluation mode, we report the checkpoint with the best average performance (averaged over four seeds), reporting mean$\pm$std accuracy.}
  \label{tab:opsd8b}
  \centering
  \setlength{\tabcolsep}{4pt}
  \begin{adjustbox}{center,max width=0.9\textwidth}
  \begin{tabular}{lcccccccc}
    \toprule
     & \multicolumn{4}{c}{\textbf{Thinking-on eval}} & \multicolumn{4}{c}{\textbf{Thinking-off eval}} \\
    \cmidrule(lr){2-5} \cmidrule(lr){6-9}
    \textbf{Method} & \textbf{AIME24} & \textbf{AIME25} & \textbf{HMMT25} & \textbf{Avg.} & \textbf{AIME24} & \textbf{AIME25} & \textbf{HMMT25} & \textbf{Avg.} \\
    \midrule
    \multicolumn{9}{l}{\textit{Baselines}} \\
    \quad Base & 75.6 & 68.6 & 45.0 & 63.1 & 26.9 & 21.4 & 9.4 & 19.3 \\
    \quad GRPO$_{\text{think}}$ & 76.5$\pm$0.3 & 69.4$\pm$1.0 & 45.1$\pm$2.7 & 63.7 & 28.5$\pm$0.8 & 22.4$\pm$1.7 & 11.8$\pm$1.5 & 20.9 \\
    \quad GRPO$_{\text{nothink}}$ & 76.5$\pm$0.5 & 70.8$\pm$1.4 & 45.6$\pm$1.8 & 64.3 & 34.6$\pm$2.0 & 23.6$\pm$1.3 & 12.9$\pm$1.1 & 23.7 \\
    \quad SFT  & 76.0$\pm$1.7 & 70.0$\pm$0.5 & 43.0$\pm$1.7 & 63.0 & \textbf{65.1$\pm$1.0} & \textbf{51.2$\pm$2.0} & \textbf{33.8$\pm$1.0} & \textbf{50.0} \\
    \midrule
    \multicolumn{9}{l}{\textit{OPSD (teacher thinking-on)}} \\
    \quad Ref $z$  & 77.2$\pm$1.1 & 71.6$\pm$1.1 & 46.4$\pm$0.4 & 65.0 & 45.1$\pm$1.6 & 34.5$\pm$0.7 & 19.8$\pm$2.4 & 33.1 \\
    \quad No $z$       & 78.1$\pm$1.0 & 71.0$\pm$1.4 & 46.7$\pm$1.3 & \textbf{65.3} & 57.6$\pm$1.2 & 45.6$\pm$2.0 & 30.6$\pm$2.9 & 44.6 \\
    \quad Shuffled $z$ & \textbf{78.3$\pm$0.6} & 71.2$\pm$0.9 & 45.9$\pm$1.0 & 65.1 & 41.5$\pm$7.3 & 36.7$\pm$4.5 & 21.2$\pm$3.4 & 33.2 \\
    \midrule
    \multicolumn{9}{l}{\textit{OPSD (teacher thinking-off)}} \\
    \quad Ref $z$ & 77.2$\pm$0.8 & 70.2$\pm$1.7 & 46.2$\pm$1.3 & 64.6 & 26.8$\pm$1.6 & 22.7$\pm$0.8 & 9.1$\pm$0.5 & 19.5 \\
    \quad No $z$       & 76.4$\pm$1.4 & \textbf{72.1$\pm$1.3} & 44.9$\pm$0.3 & 64.5 & 26.5$\pm$2.3 & 22.6$\pm$0.7 & 12.1$\pm$0.4 & 20.4 \\
    \quad Shuffled $z$ & 77.6$\pm$0.6 & 70.2$\pm$0.8 & \textbf{46.8$\pm$0.9} & 64.9 & 26.9$\pm$1.9 & 19.2$\pm$1.5 & 11.2$\pm$1.7 & 19.1 \\
    \bottomrule
  \end{tabular}
  \end{adjustbox}
\end{table}

\textbf{The content of $z$ does not matter.} Holding the teacher's thinking mode and the evaluation mode fixed, we ablate $z$; on a sample rollout, removing it barely changes the teacher's token probabilities (\cref{fig:teaser}). ``No $z$'' often matches or outperforms ``ref $z$'', especially in \citet{opsd}'s original setting (thinking teachers and thinking-on evaluations), and ``ref $z$'' achieves the best average accuracy in only three of $18$ configurations, each by under one point. In the remaining thinking-on evaluations, the best choice is always ``no $z$'' or ``shuffled $z$''. Except for thinking-off evaluations with a thinking-on teacher, the average accuracies of the three choices of $z$ lie within $3.6$ points of each other in every configuration other than Qwen3-1.7B with a non-thinking teacher (up to $5.2$; \cref{tab:opsd8b}, \cref{app:opsdcont}). With a thinking-on teacher, thinking-off accuracy varies more across $z$ (up to $11.7$ points), but ``ref $z$'' is never the best choice: ``no $z$'' or ``shuffled $z$'' beats it by up to $11.5$ points on Qwen3-8B. \textit{At every scale, a variant lacking either the presence or the relevance of a correct reference matches ``ref $z$'', so neither is required for OPSD's gains.}

\textbf{Thinking-on gains are small and do not discriminate between hypotheses.} OPSD's headline results are thinking-on evaluations of students trained with thinking teachers, which \citet{opsd} attribute to teacher reasoning transfer. Beyond the smallest model, gains are modest ($7.4$, $0.8$, $2.2$, and $1.1$ points for Qwen3-1.7B to Qwen3-14B for the best $z$), and every variant tested recovers them: thinking-off distillation ($6.1$, $0.8$, $1.8$, and $0.7$), any choice of $z$, and, on Qwen3-8B, a loss restricted to the last $10\%$ of tokens (\cref{app:opsdmasking}). GRPO trained thinking-off is also always slightly larger than thinking-on GRPO. Thinking-on gains are thus not evidence for any particular transfer mechanism, so we focus on thinking-off evaluations, the student's native inference mode.

\textbf{Thinking-on teachers induce student thinking.} Rather than improving student direct-answer capabilities, OPSD with a thinking teacher relies on thinking-mode transfer. Enabling teacher thinking improves average thinking-off accuracy by $7.6$, $22.9$, $17.3$, and $28.9$ points for Qwen3-1.7B to Qwen3-14B, whereas replacing ref $z$ with shuffled $z$ changes it by $2.15$, $-0.2$, $-0.15$, and $-0.15$ points. Likewise, Olmo-3-7B-Instruct gains $3.6$ points with a self-teacher but $13.8$ with Olmo-3-7B-Think, averaged over choices of $z$ (\cref{tab:opsdolmo7b}). Since \citet{opsd} report only results with a thinking teacher, their remaining thinking-off gains are driven by the teacher's thinking mode, not by privileged information. This transfer is visible in the students' outputs: in thinking-off evaluations, Qwen3 students distilled from a thinking teacher often emit an \textit{orphan} \texttt{</think>} token, closing a thinking block that the chat template has already closed (\cref{app:qwenthink}). These rollouts reason in the first person, box an answer, emit \texttt{</think>}, then write a clean solution, mirroring a thinking-on response (\cref{fig:orphantokenex,fig:base8bthinkoff,fig:base8bthinkon}). Among on-policy methods, orphan tokens occur mainly in thinking-off rollouts from OPSD with thinking teachers (up to $69.9\%$); GRPO and the base model produce none, and OPSD with a thinking-off teacher at most $6.7\%$. Thinking-on teachers also increase response length by $2.72$ to $5.82$ times, approaching the base model's thinking-on lengths (\cref{tab:qwen38b_summary}, \cref{app:opsdcont}), whereas length stays more stable with teacher thinking disabled ($0.8$ to $2.52$ times the base model's). Thus, with teacher thinking enabled, OPSD's thinking-off gains come from forcing thinking-mode behavior into thinking-off mode, which requires no privileged information.

\subsection{Self-distillation gradients do not align with policy gradients}\label{subsec:orthogonality}
\begin{wraptable}{R}{0.42\textwidth}
  \vspace{-\intextsep}
  \centering
  \footnotesize
  \setlength{\tabcolsep}{3pt}
  \caption{Cosine similarity with the held-out GRPO$_{\text{nothink}}$ gradient on math for Qwen3-8B (pooled over four seeds $\pm$ jackknife error).}
  \label{tab:gradcos_math}
  \begin{adjustbox}{max width=\linewidth}
  \begin{tabular}{lc}
    \toprule
    \textbf{Method} & \textbf{Similarity} \\
    \midrule
    \multicolumn{2}{l}{\textit{Baselines}} \\
    \quad GRPO$_{\text{think}}$ & $-0.014\pm0.019$ \\
    \quad GRPO$_{\text{nothink}}$ & $\mathbf{+0.118\pm0.050}$ \\
    \quad SFT & $-0.075\pm0.031$ \\
    \midrule
    \multicolumn{2}{l}{\textit{OPSD (teacher thinking-on)}} \\
    \quad Ref $z$ & $-0.160\pm0.020$ \\
    \quad No $z$ & $-0.157\pm0.019$ \\
    \quad Shuffled $z$ & $-0.157\pm0.019$ \\
    \midrule
    \multicolumn{2}{l}{\textit{OPSD (teacher thinking-off)}} \\
    \quad Ref $z$ & $-0.163\pm0.013$ \\
    \quad No $z$ & $-0.172\pm0.018$ \\
    \quad Shuffled $z$ & $-0.163\pm0.011$ \\
    \bottomrule
  \end{tabular}
  \end{adjustbox}
  \vspace{1.5em} %
  \centering
  \footnotesize
  \setlength{\tabcolsep}{3pt}
  \caption{Cosine similarity with the held-out GRPO gradient on LiveCodeBench v6 (\textsc{test\_unseen}) for methods trained on \textsc{train\_seen} (pooled over four seeds $\pm$ jackknife error).}
  \label{tab:gradcos_lcb}
  \begin{adjustbox}{max width=\linewidth}
  \begin{tabular}{lcc}
    \toprule
    \textbf{Model} & \textbf{GRPO} & \textbf{SDPO} \\
    \midrule
    Qwen3-8B & $\mathbf{+0.121\pm0.011}$ & $-0.167\pm0.063$ \\
    Qwen3-14B & $\mathbf{+0.070\pm0.042}$ & $+0.013\pm0.112$ \\
    Olmo-3-7B-Instruct & $\mathbf{+0.422\pm0.091}$ & $+0.062\pm0.028$ \\
    \bottomrule
  \end{tabular}
  \end{adjustbox}
\end{wraptable}

The preceding sections examine the behavior of trained models. We now examine the self-distillation update itself and ask whether a self-distillation step moves the policy in a direction that increases reward on held-out problems. Let $J_{\text{held}}(\theta)$ be $\pi_\theta$'s expected reward on held-out problems and $g$ be a given method's update direction. To a first order approximation,
\begin{equation}
J_{\text{held}}(\theta_0 + \eta g) - J_{\text{held}}(\theta_0) = \eta \langle \nabla_\theta J_{\text{held}}(\theta_0), g\rangle + O(\eta^2),
\end{equation}
so for $\eta$ sufficiently small, a step along $g$ increases held-out reward if and only if $g$ has positive cosine similarity with $\nabla_\theta J_{\text{held}}$. We estimate $\nabla_\theta J_{\text{held}}$ by the GRPO gradient on held-out problems and $g$ by the self-distillation gradient and propose that their cosine similarity therefore measures whether a method's update generalizes locally, without needing to train to convergence. This cosine similarity metric is also loosely motivated by \cref{subsec:equivalence}): if the self-conditioned teacher approximates $\pi^*$ (\cref{eq:prop3.1}), as \citet{sdft} assume, the distillation gradient should follow the policy gradient on the training problems. Here we ask the analogous question on held-out problems, where the policy gradient is the direction that improves generalization.

\textbf{Setup.}
Practical self-distillation losses differ from the idealized objectives of \cref{prop:equivalence}: they use other divergences such as Jensen--Shannon, per-token clipping, top-$k$ logit approximations, and EMA teachers (\cref{eq:sd_estimator}), so rather than testing the proposition's equality directly, we test a consequence: a method whose teacher approximates the optimal teacher should update the model in the same direction as the policy gradient. We therefore measure the cosine similarity between each method's training gradient and a \textit{reference}, the GRPO gradient on held-out problems, which is the policy gradient direction that increases reward on new problems. To a first order approximation, a small step along a method's gradient increases held-out reward if its cosine similarity with the reference is positive, and decreases it if negative.
All gradients are estimated at the base model: on-policy RL favors high-reward policies that stay close to the base model in KL \citep{razor}, so we test whether the teacher is KL-optimal relative to the base model. We use the same setup as \cref{subsec:sdpo} and \cref{subsec:opsd}: for math, methods train on OpenThoughts and the reference is GRPO$_{\text{nothink}}$ on AIME24, AIME25, and HMMT25 (held-out math policy gradient); for LiveCodeBench v6, methods train on \textsc{train\_seen} and the reference is GRPO on \textsc{test\_unseen}. We report our complete results in \cref{app:gradprobecont}. Since cosine similarity is a nonlinear function of the pooled gradients and we estimate gradients over four seeds, we estimate its error with a leave-one-seed-out jackknife; we detail our gradient estimation method in \cref{app:gradprobe}.

\textbf{OPSD points away from the held-out math policy gradient.} Across Qwen3-1.7B to 14B, GRPO$_{\text{nothink}}$ is the only method with positive cosine similarity with the held-out policy gradient on every model ($+0.018$ to $+0.166$), and clearly so on Qwen3-8B and 14B ($+0.118\pm0.050$ and $+0.166\pm0.029$; \cref{tab:gradcos_math,tab:gradcos_opsd_all}). In contrast, all $24$ OPSD variants have negative cosine similarity ($-0.205$ to $-0.067$). On Qwen3-8B, the six variants lie between $-0.172$ and $-0.157$, within each other's error bars. Notably, OPSD with a thinking-on teacher points away from the held-out policy gradient despite its large thinking-off gains (\cref{tab:opsd8b}): rather than moving the student towards its own high-reward solutions like policy gradients, it improves performance by matching the teacher's thinking style (\cref{subsec:opsd}). Instead, on Olmo-3-7B-Instruct, GRPO and all six OPSD variants have small positive cosine similarity ($+0.026$ to $+0.062$), but their jackknife errors ($0.021$ to $0.054$) are comparable to the values themselves, so none are clearly resolved from zero (\cref{tab:gradcos_opsd_olmo}).

\textbf{SDPO gradients are less aligned than GRPO gradients.} On LiveCodeBench v6, SDPO is less aligned with the held-out policy gradient than GRPO on all three models, and points away from it on Qwen3-8B (\cref{tab:gradcos_lcb}). These results are consistent with SDPO's failure to improve \textsc{test\_unseen} accuracy (\cref{subsec:sdpo}), suggesting the self-distillation objective is only weakly correlated with generalizable reward maximization.

\textbf{Self-distillation gradients have low variance but are misaligned.} OPSD and SDPO gradients are nearly identical across seeds: their gradient noise scale \citep{gradientnoisescale}, a normalized measure of variance across seeds where $0$ is noiseless (\cref{app:gradprobe}), is at most $0.05$, versus $1.97$ to $4.11$ for GRPO across all models (see \cref{app:gradprobecont}). Self-distillation thus delivers the low-variance signal it promises, but that signal does not point along the held-out policy gradient: the self-conditioned teacher does not approximate the optimal teacher of \cref{prop:equivalence}.

\section{Limitations}\label{sec:limitations}
Our experiments are limited to the Qwen3 and Olmo model families and the SDPO and OPSD self-distillation recipes. We do not address the more recent developments on self-distillation techniques, which include harnesses \citep{ophsd}, contrastive formulations \citep{rlcsd}, or more complex aggregation of feedback from neighboring rollouts \citep{multirollout}, which may combat the limitations of these self-distillation methods. Moreover, we focus on verifiable domains and acknowledge that self-distillation has use cases in non-verifiable settings \citep{aligning}.

\section{Conclusion}
We show that the gains attributed to privileged self-distillation either do not generalize or do not come from privileged information. SDPO's improvements on LiveCodeBench v6 are confined to problems seen during training, and its gains in other domains are come at the cost of out-of-distribution performance. OPSD's math gains rely on thinking-mode transfer, with privileged information having little effect. Finally, self-distillation gradients do not align with the policy gradient on held-out problems, indicating that the self-conditioned teacher does not approximate the optimal teacher. Together, these results highlight the need to develop principled reinforcement learning methods that can exploit privileged information.

\subsection*{AI use statement}
In this work, we used generative AI tools to propose or refine hypotheses, design or provide feedback on research methodology or experiments, implement methods, support qualitative and thematic data analysis, and interpret results. We did not use generative AI tools to help develop theoretical models or conceptual frameworks, formulate mathematical claims, provide critical ingredients for proving mathematical claims, or assist in writing proofs. Generating synthetic datasets, translation, and cleaning or reformatting datasets are not applicable to this work. Additionally, we used generative AI tools to create or modify scientific figures, summarize or analyze existing literature, discover research topics or identify gaps, brainstorm, search for information, identify relevant literature, propose a title or keywords, draft parts of the paper, edit the paper for readability, and create or edit software code. We have reviewed all AI-assisted work: two authors reviewed and tested LLM-generated code for correctness, and we checked all LLM-retrieved examples. We take responsibility for the final content of this work, including text, claims, and artifacts produced with the aid of generative AI.

\subsection*{Reproducibility statement}
We make our code publicly available at \coderepo. We detail our experimental setups and hyperparameters for SDPO and GRPO on LiveCodeBench v6, SciKnowEval, and ToolAlpaca in \cref{subsec:sdpo_implementation}; for OPSD, GRPO, and SFT on OpenThoughts in \cref{app:opsdhyperparams}; for the gradient similarity analysis in \cref{app:gradprobe}; and the OPSD prompts in \cref{app:opsdprompts}. The proof of \cref{prop:equivalence} is in \cref{app:propproof}.

\bibliography{iclr2027_conference}
\bibliographystyle{iclr2027_conference}

\newpage
\appendix
\crefalias{section}{appendix}
\crefalias{subsection}{subappendix}
\crefalias{subsubsection}{subsubappendix}

 \section{Extended related works} \label{app:extended_related_works}
We divide existing work on the limitations of self-distillation techniques into the following:

\textbf{Empirical re-evaluations.} \citet{sdpobadoodifbadteachersignal} demonstrate that SDPO struggles to generalize to out-of-distribution domains, and fail in continual learning settings where GRPO succeeds. \citet{sdthinkbad} highlight that self-distillation degrades thinking-models; with previous works primarily focusing on instruction-tuned models. Concurrent to our work, \citet{opsdscoop,opsdother,ichihara2026privileged} show that OPSD does not rely on privileged information, but on thinking-mode transfer. In contrast to these works, we provide our results with an \textit{exact} reproduction of SDPO and OPSD training and evaluating settings, and we ask whether SDPO's gains in other domains reflect improvements in reasoning.

\textbf{Token-level symptoms.} \citet{kim2026degrade,sdthinkbad,rlcsd,microsoft} show that self-distillation losses concentrate on tokens which express uncertainty and have high entropy. This results in a suppression of the student's backtracking behavior and collapse in response lengths. Instead, we focus on the positive results reported by OPSD, and find through token-level diagnoses and response length metrics, that performance improvements relate to thinking-mode transfer.

\textbf{Sequence-level diagnoses.} \citet{pinotvalue} develop statistics that indicate that the self-conditioned teacher cannot differentiate between correct and incorrect solutions generated by the student. \citet{microsoft} present analogous results, showing that the self-conditioned teacher is greatly biased towards the reference solution it is provided, mirroring \citet{yang2026selfdistilledrlvr}'s results indicating that the teacher leaks privileged information. On the other hand, we diagnose whether the self-conditioned teacher approximates the optimal KL-regularized model, which is an assumption behind prior self-distillation work \citep{sdft}.

\textbf{Remedies.} \citet{rlcsd} and \citet{yang2026selfdistilledrlvr} retain GRPO as a learning signal and use the self-conditioned teacher only to modulate its per-token advantages. Because these methods do not rely on the self-distillation loss alone, which is the setting we study, they fall outside the scope of this work. Their reliance on GRPO is consistent with \citet{microsoft}, who question whether the self-distillation objective on its own, without such auxiliary machinery, yields any performance gains at all.

\section{Proof of \cref{prop:equivalence}} \label{app:propproof}
We restate the proposition for convenience.
\equivprop*

\begin{proof}
We use the notation of the proposition, suppressing $x$. Since $\mathcal{Y}$ is finite, all expectations over $\mathcal{Y}$ are finite sums, so gradients commute with expectations, and we have the log-derivative identity $\nabla_\theta \pi_\theta(y) = \pi_\theta(y) \nabla_\theta \log \pi_\theta(y)$ and the score identity $\mathbb{E}_{y \sim \pi_\theta}[\nabla_\theta \log \pi_\theta(y)] = \sum_{y} \nabla_\theta \pi_\theta(y) = 0$. Let $J(\theta) = \mathbb{E}_{y \sim \pi_\theta}[R(y)]$ and $g = \nabla_\theta J(\theta)$. We show that both KL gradients are negative multiples of $g$.

\textbf{Policy gradient.} By the log-derivative identity,
\begin{equation*}
    g = \mathbb{E}_{y \sim \pi_\theta}\left[ R(y)\, \nabla_\theta \log \pi_\theta(y) \right].
\end{equation*}

\textbf{Reverse KL.} Since $\sg{(\pi^*)}$ is constant in $\theta$,
\begin{align*}
    \nabla_\theta \KL{\pi_\theta}{\sg{(\pi^*)}}
    &= \sum_{y} (\nabla_\theta \pi_\theta(y)) \log \frac{\pi_\theta(y)}{\pi^*(y)} + \sum_{y} \pi_\theta(y)\, \nabla_\theta \log \pi_\theta(y) \\
    &= \mathbb{E}_{y \sim \pi_\theta}\left[ \log \frac{\pi_\theta(y)}{\pi^*(y)}\, \nabla_\theta \log \pi_\theta(y) \right] && \text{(score identity)} \\
    &= \mathbb{E}_{y \sim \pi_\theta}\left[ \left( \log Z - R(y)/\beta \right) \nabla_\theta \log \pi_\theta(y) \right] && \text{(definition of $\pi^*$)} \\
    &= -\frac{1}{\beta}\, g. && \text{(score identity)}
\end{align*}

\textbf{Forward KL.} Since $\pi^*(y) > 0$ if and only if $\pi_\theta(y) > 0$, we may change measure from $\pi^*$ to $\pi_\theta$:
\begin{align*}
    \nabla_\theta \KL{\sg{(\pi^*)}}{\pi_\theta}
    &= -\mathbb{E}_{y \sim \pi^*}\left[ \nabla_\theta \log \pi_\theta(y) \right] \\
    &= -\mathbb{E}_{y \sim \pi_\theta}\left[ \frac{\pi^*(y)}{\pi_\theta(y)}\, \nabla_\theta \log \pi_\theta(y) \right] \\
    &= -\frac{1}{Z}\, \mathbb{E}_{y \sim \pi_\theta}\left[ \exp(R(y)/\beta)\, \nabla_\theta \log \pi_\theta(y) \right].
\end{align*}
Because $R(y) \in \{0,1\}$, we have $\exp(R(y)/\beta) = 1 + (e^{1/\beta} - 1) R(y)$. Substituting and applying the score identity to the constant term,
\begin{equation*}
    \nabla_\theta \KL{\sg{(\pi^*)}}{\pi_\theta} = -\frac{e^{1/\beta} - 1}{Z}\, g.
\end{equation*}

\textbf{Conclusion.} Both $1/\beta$ and $(e^{1/\beta} - 1)/Z$ are strictly positive for $\beta > 0$. Hence, whenever $g \neq \vec{0}$, the negated gradients of both KL objectives are positive multiples of $g$, and normalizing each to unit $\ell_2$ norm yields \cref{eq:prop3.1}.
\end{proof}

\section{Qwen3 thinking modes} \label{app:qwenthink}
Qwen3 models support a thinking and a non-thinking mode in the \textit{same model} \citep{qwen3technicalreport}. For the flagship models (Qwen3-32B and Qwen3-235B-A22B), the two modes are merged during post-training (\textit{Thinking Mode Fusion}) by fine-tuning on a mixture of ``thinking'' and ``non-thinking'' data. The smaller models (Qwen3 0.6B to 14B and Qwen3-30B-A3B) inherit both of these modes through supervised fine-tuning and on-policy-distillation from the flagship models. Concretely, in thinking mode, the model reasons inside a \texttt{<think>}\ldots\texttt{</think>} block before responding. The mode is toggled via the \texttt{enable\_thinking} argument of the chat template. When it is set to False, an empty think block is prefilled in the assistant turn:
\begin{verbatim}
<|im_start|>assistant
<think>

</think>
\end{verbatim}
In non-thinking mode, the model is trained to respond directly after this empty thinking block.

\section{Extended experimental results}

\subsection{SDPO LiveCodeBench v6 performance}\label{app:sdpolcbcont}
\Cref{fig:sdpolcb14b,fig:sdpolcbolmo} show the LiveCodeBench v6 training curves for Qwen3-14B and Olmo-3-7B-Instruct, complementing \cref{fig:sdpolcb} for Qwen3-8B. \Cref{tab:sdpolcb_trainseen,tab:sdpolcb_trainunseen,tab:sdpolcb_testunseen} also reports the scores of all three models on \textsc{test\_unseen}, \textsc{train\_seen}, and \textsc{train\_unseen} at training steps 0, 80, and 200.

\begin{figure}[h]
\begin{center}
\includegraphics[width=0.9\textwidth]{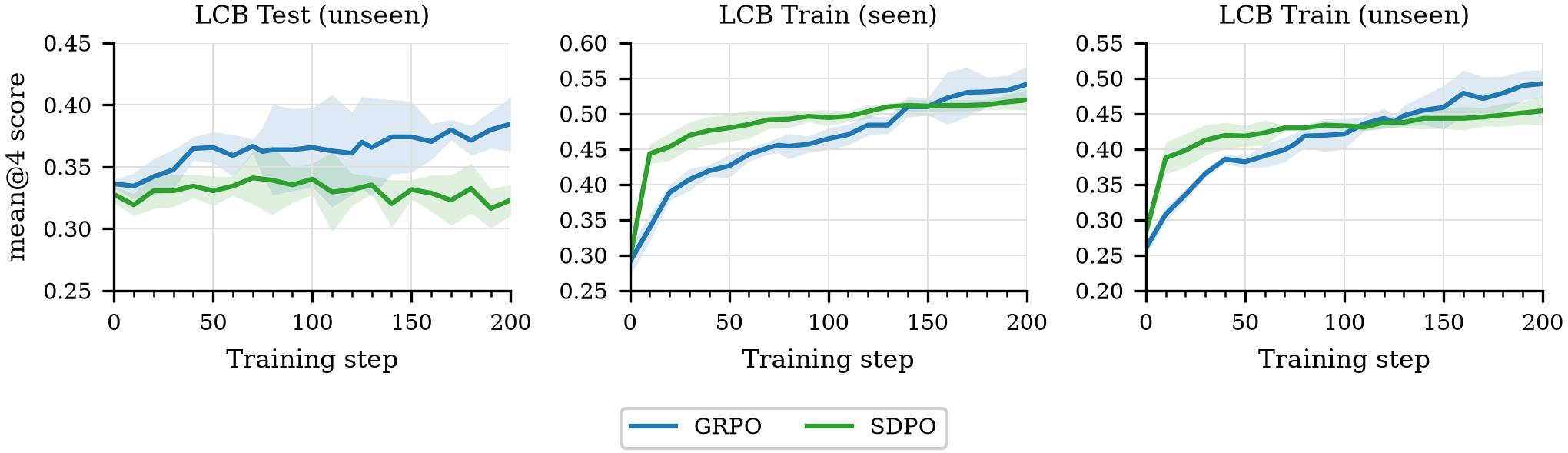}
\end{center}
\caption{Qwen3-14B SDPO vs.\ GRPO on LiveCodeBench v6 across four seeds, scored as the fraction of problems with no failing test cases (mean $\pm$ std). \textsc{train\_seen} and \textsc{train\_unseen} evaluate problems seen during training, on seen and on held-out test cases respectively; \textsc{test\_unseen} evaluates held-out problems.}
\label{fig:sdpolcb14b}
\end{figure}

\begin{figure}[h]
\begin{center}
\includegraphics[width=0.9\textwidth]{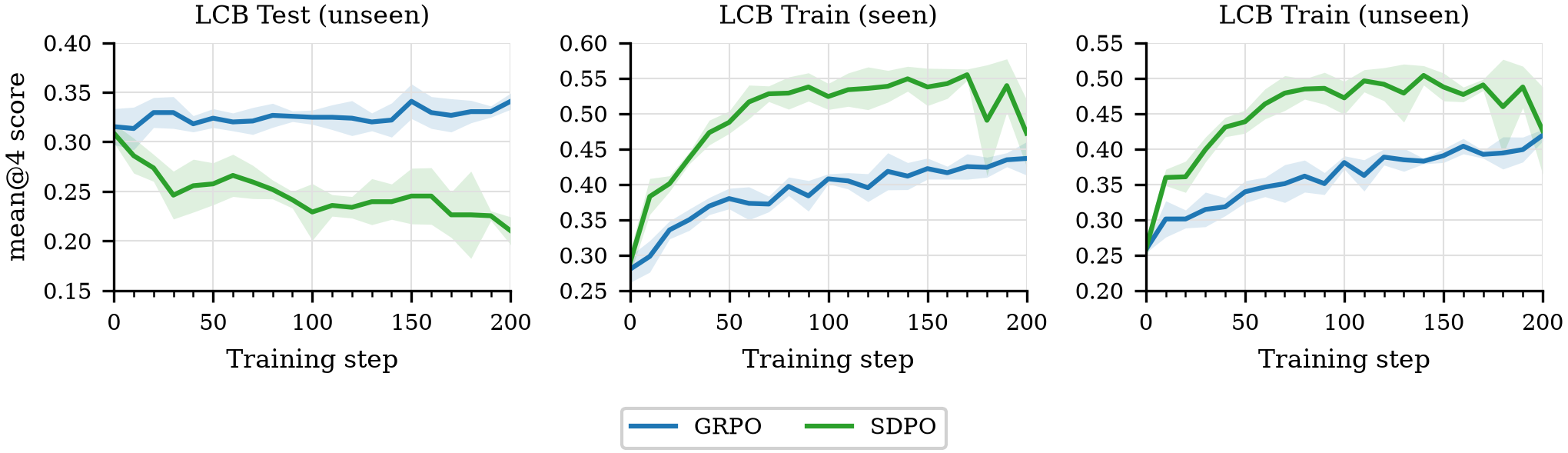}
\end{center}
\caption{Olmo-3-7B-Instruct SDPO vs.\ GRPO on LiveCodeBench v6 across four seeds, scored as the fraction of problems with no failing test cases (mean $\pm$ std). \textsc{train\_seen} and \textsc{train\_unseen} evaluate problems seen during training, on seen and on held-out test cases respectively; \textsc{test\_unseen} evaluates held-out problems.}
\label{fig:sdpolcbolmo}
\end{figure}

\begin{table}[h]
  \caption{LiveCodeBench v6 score (fraction of problems with no failing test cases, mean$\pm$std over four seeds) on \textsc{test\_unseen} for SDPO and GRPO at initialization, 80 steps, and 200 steps. Bold marks the better method at each step.}
  \label{tab:sdpolcb_testunseen}
  \centering
  \setlength{\tabcolsep}{4pt}
  \begin{tabular}{llccc}
    \toprule
    \textbf{Model} & \textbf{Method} & \textbf{Step 0} & \textbf{Step 80} & \textbf{Step 200} \\
    \midrule
    Qwen3-8B & SDPO & \textbf{32.3$\pm$1.6} & 30.1$\pm$0.7 & 29.8$\pm$1.1 \\
     & GRPO & 30.6$\pm$1.0 & \textbf{33.3$\pm$1.9} & \textbf{34.3$\pm$0.5} \\
    \midrule
    Qwen3-14B & SDPO & 32.8$\pm$0.7 & 33.9$\pm$2.8 & 32.3$\pm$1.3 \\
     & GRPO & \textbf{33.6$\pm$0.4} & \textbf{36.4$\pm$3.7} & \textbf{38.5$\pm$2.2} \\
    \midrule
    Olmo-3-7B-Instruct & SDPO & 30.9$\pm$1.0 & 25.2$\pm$1.0 & 21.0$\pm$1.4 \\
     & GRPO & \textbf{31.5$\pm$1.8} & \textbf{32.7$\pm$1.2} & \textbf{34.1$\pm$0.8} \\
    \bottomrule
  \end{tabular}
\end{table}
\begin{table}[h]
  \caption{LiveCodeBench v6 score (fraction of problems with no failing test cases, mean$\pm$std over four seeds) on \textsc{train\_seen} for SDPO and GRPO at initialization, 80 steps, and 200 steps. Bold marks the better method at each step.}
  \label{tab:sdpolcb_trainseen}
  \centering
  \setlength{\tabcolsep}{4pt}
  \begin{tabular}{llccc}
    \toprule
    \textbf{Model} & \textbf{Method} & \textbf{Step 0} & \textbf{Step 80} & \textbf{Step 200} \\
    \midrule
    Qwen3-8B & SDPO & \textbf{25.8$\pm$1.8} & \textbf{48.6$\pm$0.7} & 48.9$\pm$1.6 \\
     & GRPO & 25.1$\pm$0.7 & 43.8$\pm$2.4 & \textbf{50.9$\pm$1.1} \\
    \midrule
    Qwen3-14B & SDPO & \textbf{29.4$\pm$0.4} & \textbf{49.2$\pm$1.3} & 51.9$\pm$1.5 \\
     & GRPO & 29.0$\pm$2.0 & 45.4$\pm$1.8 & \textbf{54.1$\pm$2.5} \\
    \midrule
    Olmo-3-7B-Instruct & SDPO & \textbf{28.9$\pm$1.7} & \textbf{52.9$\pm$2.3} & \textbf{47.2$\pm$4.7} \\
     & GRPO & 28.0$\pm$1.9 & 39.7$\pm$1.3 & 43.7$\pm$2.4 \\
    \bottomrule
  \end{tabular}
\end{table}

\begin{table}[h]
  \caption{LiveCodeBench v6 score (fraction of problems with no failing test cases, mean$\pm$std over four seeds) on \textsc{train\_unseen} for SDPO and GRPO at initialization, 80 steps, and 200 steps. Bold marks the better method at each step.}
  \label{tab:sdpolcb_trainunseen}
  \centering
  \setlength{\tabcolsep}{4pt}
  \begin{tabular}{llccc}
    \toprule
    \textbf{Model} & \textbf{Method} & \textbf{Step 0} & \textbf{Step 80} & \textbf{Step 200} \\
    \midrule
    Qwen3-8B & SDPO & \textbf{25.6$\pm$0.5} & \textbf{44.1$\pm$1.0} & 44.0$\pm$1.7 \\
     & GRPO & 25.4$\pm$1.0 & 41.2$\pm$2.4 & \textbf{46.2$\pm$2.0} \\
    \midrule
    Qwen3-14B & SDPO & \textbf{28.2$\pm$1.4} & \textbf{43.0$\pm$0.2} & 45.4$\pm$2.1 \\
     & GRPO & 26.0$\pm$1.0 & 41.8$\pm$1.6 & \textbf{49.2$\pm$2.0} \\
    \midrule
    Olmo-3-7B-Instruct & SDPO & \textbf{25.9$\pm$1.6} & \textbf{48.5$\pm$1.4} & \textbf{42.6$\pm$6.2} \\
     & GRPO & 25.8$\pm$0.4 & 36.2$\pm$2.3 & 41.9$\pm$0.8 \\
    \bottomrule
  \end{tabular}
\end{table}

\subsection{SDPO science reasoning and tool use performance}\label{app:sdpoothercont}
\Cref{fig:crossvalgrid} shows the validation accuracy of GRPO and SDPO on all five SciKnowEval and ToolAlpaca domains throughout training, complementing the step $200$ results in \cref{tab:cross_domain} (\cref{subsec:sft}).

\begin{figure}[h]
  \centering
  \makebox[\textwidth][c]{\includegraphics[width=1.3\textwidth]{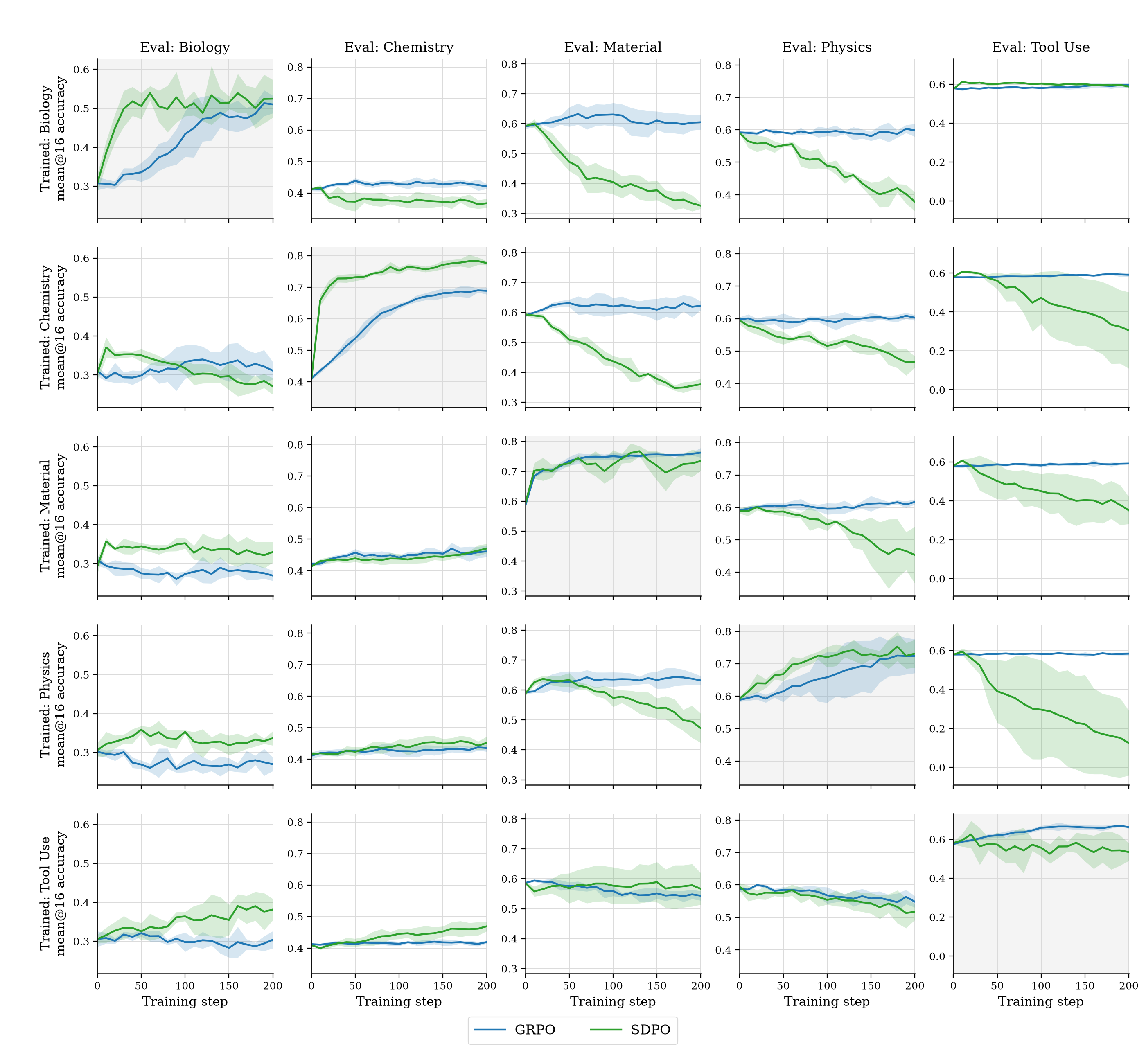}}
  \caption{Cross-domain validation accuracy of Qwen3-8B trained with GRPO and SDPO on SciKnowEval and ToolAlpaca, averaged over four seeds. Panels vary over training domains (across rows) and evaluation domains (across columns).}
  \label{fig:crossvalgrid}
\end{figure}

\FloatBarrier

\subsection{OPSD metrics on Qwen3 models} \label{app:opsdcont}

\begin{table}[h]
  \caption{Qwen3-1.7B training methods evaluated under \texttt{avg@12} with thinking-on and thinking-off evaluation. OPSD think/no think denote whether teacher thinking was on during OPSD training. The subscript on GRPO runs denotes whether thinking was on or off during training. Denote $z$ as the reference solution shown to the teacher. For every evaluation mode, we report the checkpoint with the best average performance (averaged over four seeds), reporting mean$\pm$std accuracy.}
  \label{tab:opsd1.7b}
  \centering
  \setlength{\tabcolsep}{4pt}
  \begin{adjustbox}{center,max width=0.9\textwidth}
  \begin{tabular}{lcccccccc}
    \toprule
     & \multicolumn{4}{c}{\textbf{Thinking-on eval}} & \multicolumn{4}{c}{\textbf{Thinking-off eval}} \\
    \cmidrule(lr){2-5} \cmidrule(lr){6-9}
    \textbf{Method} & \textbf{AIME24} & \textbf{AIME25} & \textbf{HMMT25} & \textbf{Avg.} & \textbf{AIME24} & \textbf{AIME25} & \textbf{HMMT25} & \textbf{Avg.} \\
    \midrule
    \multicolumn{9}{l}{\textit{Baselines}} \\
    \quad Base & 46.7 & 34.4 & 21.9 & 34.4 & 15.8 & 10.0 & 4.4 & 10.1 \\
    \quad GRPO$_\text{think}$ & 49.9$\pm$3.5 & 36.3$\pm$1.3 & 23.5$\pm$3.1 & 36.6 & 15.3$\pm$1.8 & 8.7$\pm$1.9 & 5.9$\pm$1.4 & 10.0 \\
    \quad GRPO$_\text{nothink}$ & 49.9$\pm$2.1 & 37.8$\pm$1.7 & 24.7$\pm$1.1 & 37.4 & 14.7$\pm$1.1 & 11.4$\pm$0.6 & 6.7$\pm$0.6 & 10.9 \\
    \quad SFT  & 50.3$\pm$0.8 & 36.9$\pm$1.3 & 25.8$\pm$1.4 & 37.6 & \textbf{37.8$\pm$3.3} & \textbf{27.6$\pm$1.2} & \textbf{18.3$\pm$2.1} & \textbf{27.9} \\
    \midrule
    \multicolumn{9}{l}{\textit{OPSD (teacher thinking-on)}} \\
    \quad Ref $z$  & \textbf{56.0$\pm$1.2} & \textbf{42.4$\pm$1.6} & 27.1$\pm$0.5 & \textbf{41.8} & 26.7$\pm$2.0 & 19.7$\pm$1.7 & 12.2$\pm$1.3 & 19.5 \\
    \quad No $z$       & 53.0$\pm$2.3 & 39.7$\pm$1.0 & 26.5$\pm$0.9 & 39.7 & 16.4$\pm$1.4 & 13.1$\pm$0.7 & 7.4$\pm$0.9 & 12.3 \\
    \quad Shuffled $z$ & \textbf{56.0$\pm$1.4} & 39.2$\pm$0.7 & \textbf{27.8$\pm$0.5} & 41.0 & 28.8$\pm$1.6 & 21.5$\pm$1.0 & 16.0$\pm$1.2 & 22.1 \\
    \midrule
    \multicolumn{9}{l}{\textit{OPSD (teacher thinking-off)}} \\
    \quad Ref $z$  & 54.1$\pm$1.9 & 39.7$\pm$0.9 & 27.2$\pm$1.2 & 40.3 & 15.2$\pm$0.7 & 11.5$\pm$1.5 & 6.4$\pm$0.7 & 11.0 \\
    \quad No $z$       & 49.0$\pm$2.8 & 36.9$\pm$0.9 & 21.8$\pm$1.3 & 35.9 & 11.2$\pm$0.8 & 7.2$\pm$0.8 & 4.2$\pm$0.7 & 7.5 \\
    \quad Shuffled $z$ & 54.0$\pm$0.8 & 40.1$\pm$2.0 & 27.4$\pm$1.7 & 40.5 & 16.4$\pm$1.6 & 13.2$\pm$1.4 & 8.5$\pm$0.3 & 12.7 \\
    \bottomrule
  \end{tabular}
  \end{adjustbox}
\end{table}

\begin{table}[h]
  \caption{Qwen3-4B training methods evaluated under \texttt{avg@12} with thinking-on and thinking-off evaluation. OPSD think/no think denote whether teacher thinking was on during OPSD training. The subscript on GRPO runs denotes whether thinking was on or off during training. Denote $z$ as the reference solution shown to the teacher. For every evaluation mode, we report the checkpoint with the best average performance (averaged over four seeds), reporting mean$\pm$std accuracy.}
  \label{tab:opsd4b}
  \centering
  \setlength{\tabcolsep}{4pt}
  \begin{adjustbox}{center,max width=0.9\textwidth}
  \begin{tabular}{lcccccccc}
    \toprule
     & \multicolumn{4}{c}{\textbf{Thinking-on eval}} & \multicolumn{4}{c}{\textbf{Thinking-off eval}} \\
    \cmidrule(lr){2-5} \cmidrule(lr){6-9}
    \textbf{Method} & \textbf{AIME24} & \textbf{AIME25} & \textbf{HMMT25} & \textbf{Avg.} & \textbf{AIME24} & \textbf{AIME25} & \textbf{HMMT25} & \textbf{Avg.} \\
    \midrule
    \multicolumn{9}{l}{\textit{Baselines}} \\
    \quad Base & \textbf{76.4} & 67.8 & 43.1 & 62.4 & 21.7 & 20.3 & 11.9 & 18.0 \\
    \quad GRPO$_\text{think}$ & 73.3$\pm$0.9 & 67.0$\pm$1.7 & 44.3$\pm$1.6 & 61.6 & 23.6$\pm$0.9 & 20.8$\pm$1.0 & 11.7$\pm$0.9 & 18.7 \\
    \quad GRPO$_\text{nothink}$ & 74.3$\pm$0.3 & 66.2$\pm$0.8 & 44.9$\pm$0.7 & 61.8 & 28.3$\pm$1.6 & 22.6$\pm$0.8 & 14.4$\pm$0.6 & 21.8 \\
    \quad SFT  & 74.1$\pm$1.6 & 68.3$\pm$1.1 & 42.3$\pm$1.1 & 61.6 & \textbf{61.9$\pm$2.5} & 50.7$\pm$1.4 & \textbf{32.9$\pm$1.5} & \textbf{48.5} \\
    \midrule
    \multicolumn{9}{l}{\textit{OPSD (teacher thinking-on)}} \\
    \quad Ref $z$  & 76.1$\pm$1.0 & 67.2$\pm$0.9 & 44.9$\pm$1.1 & 62.7 & 54.4$\pm$2.1 & \textbf{51.9$\pm$2.2} & 31.1$\pm$2.0 & 45.8 \\
    \quad No $z$       & 74.5$\pm$2.0 & \textbf{70.0$\pm$2.2} & 45.0$\pm$0.8 & \textbf{63.2} & 42.0$\pm$0.7 & 37.3$\pm$2.0 & 24.0$\pm$1.1 & 34.4 \\
    \quad Shuffled $z$ & 75.6$\pm$1.9 & 67.2$\pm$0.6 & \textbf{46.1$\pm$1.4} & 62.9 & 55.9$\pm$2.6 & 51.5$\pm$3.5 & 31.0$\pm$0.8 & 46.1 \\
    \midrule
    \multicolumn{9}{l}{\textit{OPSD (teacher thinking-off)}} \\
    \quad Ref $z$  & 76.2$\pm$0.5 & 66.9$\pm$1.3 & 43.6$\pm$1.5 & 62.2 & 27.3$\pm$1.7 & 21.6$\pm$0.8 & 12.1$\pm$1.0 & 20.3 \\
    \quad No $z$       & 74.0$\pm$0.3 & 66.5$\pm$1.8 & 43.1$\pm$1.1 & 61.2 & 22.6$\pm$0.9 & 19.3$\pm$1.7 & 11.3$\pm$0.1 & 17.8 \\
    \quad Shuffled $z$ & 76.2$\pm$1.1 & 68.1$\pm$2.2 & 45.2$\pm$0.7 & \textbf{63.2} & 26.2$\pm$1.8 & 22.6$\pm$1.6 & 10.1$\pm$0.7 & 19.6 \\
    \bottomrule
  \end{tabular}
  \end{adjustbox}
\end{table}

\begin{table}[h]
  \caption{Qwen3-14B training methods evaluated under \texttt{avg@12} with thinking-on and thinking-off evaluation. OPSD think/no think denote whether teacher thinking was on during OPSD training. The subscript on GRPO runs denotes whether thinking was on or off during training. Denote $z$ as the reference solution shown to the teacher. For every evaluation mode, we report the checkpoint with the best average performance (averaged over four seeds), reporting mean$\pm$std accuracy.}
  \label{tab:opsd14b}
  \centering
  \setlength{\tabcolsep}{4pt}
  \begin{adjustbox}{center,max width=0.9\textwidth}
  \begin{tabular}{lcccccccc}
    \toprule
     & \multicolumn{4}{c}{\textbf{Thinking-on eval}} & \multicolumn{4}{c}{\textbf{Thinking-off eval}} \\
    \cmidrule(lr){2-5} \cmidrule(lr){6-9}
    \textbf{Method} & \textbf{AIME24} & \textbf{AIME25} & \textbf{HMMT25} & \textbf{Avg.} & \textbf{AIME24} & \textbf{AIME25} & \textbf{HMMT25} & \textbf{Avg.} \\
    \midrule
    \multicolumn{9}{l}{\textit{Baselines}} \\
    \quad Base & 80.0 & 73.9 & 49.2 & 67.7 & 29.7 & 23.3 & 12.2 & 21.8 \\
    \quad GRPO$_{\text{think}}$ & 80.8$\pm$1.1 & \textbf{74.0$\pm$0.6} & 50.1$\pm$1.0 & 68.3 & 31.1$\pm$1.1 & 24.6$\pm$0.5 & 12.6$\pm$0.6 & 22.8 \\
    \quad GRPO$_{\text{nothink}}$ & 80.7$\pm$1.7 & 73.5$\pm$0.8 & \textbf{51.2$\pm$1.8} & 68.5 & 30.7$\pm$1.8 & 24.7$\pm$2.4 & 13.6$\pm$0.5 & 23.0 \\
    \quad SFT & 79.7$\pm$1.0 & 71.6$\pm$1.1 & 49.6$\pm$0.9 & 66.9 & 65.7$\pm$1.2 & 54.7$\pm$1.7 & 33.8$\pm$1.4 & 51.4 \\
    \midrule
    \multicolumn{9}{l}{\textit{OPSD (teacher thinking-on)}} \\
    \quad Ref $z$  & 80.8$\pm$0.6 & 72.7$\pm$1.3 & 49.7$\pm$1.5 & 67.7 & 65.3$\pm$3.8 & 51.8$\pm$6.5 & 35.4$\pm$2.4 & 50.8 \\
    \quad No $z$       & \textbf{82.3$\pm$1.1} & 73.1$\pm$1.4 & 51.0$\pm$2.3 & \textbf{68.8} & \textbf{66.3$\pm$3.3} & \textbf{59.9$\pm$1.8} & \textbf{41.0$\pm$2.9} & \textbf{55.7} \\
    \quad Shuffled $z$ & 80.5$\pm$0.7 & 73.9$\pm$1.1 & 49.7$\pm$1.2 & 68.0 & 64.2$\pm$5.8 & 54.7$\pm$5.0 & 37.6$\pm$4.6 & 52.1 \\
    \midrule
    \multicolumn{9}{l}{\textit{OPSD (teacher thinking-off)}} \\
    \quad Ref $z$  & 80.7$\pm$1.0 & 72.6$\pm$1.1 & 50.0$\pm$2.5 & 67.8 & 32.6$\pm$1.4 & 25.6$\pm$0.8 & 13.1$\pm$1.1 & 23.8 \\
    \quad No $z$       & 81.5$\pm$1.1 & 73.9$\pm$0.8 & 50.0$\pm$1.3 & 68.4 & 34.2$\pm$1.9 & 26.8$\pm$0.7 & 16.5$\pm$1.4 & 25.8 \\
    \quad Shuffled $z$ & 80.0$\pm$1.0 & \textbf{74.0$\pm$1.3} & 50.8$\pm$2.0 & 68.3 & 29.2$\pm$2.3 & 24.7$\pm$1.0 & 12.6$\pm$1.7 & 22.2 \\
    \bottomrule
  \end{tabular}
  \end{adjustbox}
\end{table}

\begin{table}[h]
  \caption{Olmo-3-7B-Instruct training methods evaluated under \texttt{avg@12}. OPSD self-distill uses the frozen Olmo-3-7B-Instruct as the teacher, while OPSD cross-model uses Olmo-3-7B-Think as the teacher. Denote $z$ as the reference solution shown to the teacher. We report the checkpoint with the best average performance (averaged over four seeds), reporting mean$\pm$std accuracy.}
  \label{tab:opsdolmo7b}
  \centering
  \setlength{\tabcolsep}{4pt}
  \begin{tabular}{lcccc}
    \toprule
    \textbf{Method} & \textbf{AIME24} & \textbf{AIME25} & \textbf{HMMT25} & \textbf{Avg.} \\
    \midrule
    \multicolumn{5}{l}{\textit{Baselines}} \\
    \quad Base & 50.3 & 37.2 & 23.6 & 37.0 \\
    \quad GRPO & 51.9$\pm$1.3 & 39.2$\pm$1.0 & 25.6$\pm$2.3 & 38.9 \\
    \quad SFT  & 53.1$\pm$0.5 & 40.4$\pm$0.9 & 24.7$\pm$1.2 & 39.4 \\
    \midrule
    \multicolumn{5}{l}{\textit{OPSD (self-distill)}} \\
    \quad Ref $z$  & 55.4$\pm$1.0 & 41.7$\pm$0.4 & 26.4$\pm$0.8 & 41.2 \\
    \quad No $z$       & 54.0$\pm$1.7 & 41.7$\pm$1.3 & 25.3$\pm$0.9 & 40.3 \\
    \quad Shuffled $z$ & 53.3$\pm$1.9 & 40.6$\pm$1.1 & 26.7$\pm$1.2 & 40.2 \\
    \midrule
    \multicolumn{5}{l}{\textit{OPSD (cross-model)}} \\
    \quad Ref $z$  & 64.6$\pm$2.5 & 50.7$\pm$1.1 & 35.9$\pm$2.2 & 50.4 \\
    \quad No $z$       & \textbf{66.1$\pm$1.3} & \textbf{52.3$\pm$2.1} & \textbf{38.1$\pm$1.3} & \textbf{52.2} \\
    \quad Shuffled $z$ & 63.7$\pm$1.1 & 51.0$\pm$1.0 & 34.7$\pm$2.0 & 49.8 \\
    \bottomrule
  \end{tabular}
\end{table}

\input{figures/tables/Qwen3-1.7B_summary}
\input{figures/tables/Qwen3-4B_summary}
\input{figures/tables/Qwen3-8B_summary}
\input{figures/tables/Qwen3-14B_summary}
\input{figures/tables/Olmo-3-7B-Instruct_summary}

\FloatBarrier

\subsection{OPSD example generations}
\begin{figure}[h]
\centering
\begin{minipage}{0.96\linewidth}
\small
\setlength{\parindent}{0pt}
\setlength{\parskip}{0pt}

\vspace{0.5em}

Problem: There exist real numbers $x$ and $y$, both greater than 1, such that $\log_x\left(y^x\right)=\log_y\left(x^{4y}\right)=10$. Find $xy$.

\vspace{0.6em}
Model output (thinking-off):

\vspace{0.3em}
\ttfamily
We are given the following equations involving logarithms: \\
\mbox{} \\
\$\$ \\
\textbackslash{}log\_x(y\^{}x) = 10 \textbackslash{}quad \textbackslash{}text\{and\} \textbackslash{}quad \textbackslash{}log\_y(x\^{}\{4y\}) = 10 \\
\$\$ \\
\mbox{} \\
We are to find the value of \$ xy \$, where \$ x \textgreater{} 1 \$ and \$ y \textgreater{} 1 \$ are real numbers. \\
\mbox{} \\
\texttt{[... solution continues ...]} \\
\mbox{} \\
\#\#\# Step 3: Multiply equations (3) and (4) \\
\mbox{} \\
Let\textquoteright{}s multiply both equations together: \\
\mbox{} \\
\$\$ \\
\textbackslash{}left(x \textbackslash{}cdot \textbackslash{}frac\{\textbackslash{}log y\}\{\textbackslash{}log x\} \textbackslash{}right) \textbackslash{}cdot \textbackslash{}left(4y \textbackslash{}cdot \textbackslash{}frac\{\textbackslash{}log x\}\{\textbackslash{}log y\} \textbackslash{}right) = 10 \textbackslash{}cdot 10 = 100 \\
\$\$ \\
\mbox{} \\
Simplify the left-hand side: \\
\mbox{} \\
\$\$ \\
x \textbackslash{}cdot 4y \textbackslash{}cdot \textbackslash{}frac\{\textbackslash{}log y\}\{\textbackslash{}log x\} \textbackslash{}cdot \textbackslash{}frac\{\textbackslash{}log x\}\{\textbackslash{}log y\} = 4xy \\
\$\$ \\
\mbox{} \\
So: \\
\mbox{} \\
\$\$ \\
4xy = 100 \textbackslash{}Rightarrow xy = \textbackslash{}frac\{100\}\{4\} = 25 \\
\$\$ \\
\mbox{} \\
--- \\
\mbox{} \\
\#\#\# Final Answer: \\
\mbox{} \\
\$\$ \\
\textbackslash{}boxed\{25\} \\
\$\$

\end{minipage}

\caption{Base Qwen3-8B thinking-off rollout on an AIME24 problem generated with temperature $1.0$. The chat template opens a \texttt{<think>} block, the model reasons in the first person, tests several numerical values of $x$ before finding a one-line argument, and closes the block with a single matched \texttt{</think>} before writing a polished solution. Exactly one \texttt{<think>} and one \texttt{</think>} appear in the generation. The full generation is $559$ tokens long.}
\label{fig:base8bthinkoff}
\end{figure}

\begin{figure}[h]
\centering
\begin{minipage}{0.96\linewidth}
\small
\setlength{\parindent}{0pt}
\setlength{\parskip}{0pt}

\vspace{0.5em}

Problem: There exist real numbers $x$ and $y$, both greater than 1, such that $\log_x\left(y^x\right)=\log_y\left(x^{4y}\right)=10$. Find $xy$.

\vspace{0.6em}
Model output (thinking-on):

\vspace{0.3em}
\ttfamily
<think> \\
Okay, so I need to solve this problem where there are real numbers x and y, both greater than 1, such that log base x of y\^{}x equals log base y of x\^{}\{4y\} equals 10. And I need to find the product xy. Alright, let me try to unpack this step by step. \\
\mbox{} \\
First, let me recall some logarithm properties. The logarithm log\_b(a) is equal to ln(a)/ln(b) if I remember correctly. Also, there's a power rule: log\_b(a\^{}c) = c*log\_b(a). \texttt{[... reasoning continues ...]} \\
\mbox{} \\
What if x is smaller? Let's try x = 2. Let me check again. Wait, tried x = 2, got around 7.28. Let me try x = 1.5. Then y = 25 / 1.5 $\approx$ 16.666... \\
\mbox{} \\
Compute log\_\{1.5\}(y\^{}1.5) = 1.5 * log\_\{1.5\}(y). log\_\{1.5\}(y) = log\_\{1.5\}(25 / 1.5) = log\_\{1.5\}(25) - log\_\{1.5\}(1.5) = log\_\{1.5\}(25) - 1. Let me compute log\_\{1.5\}(25). Let me use natural logs: ln(25)/ln(1.5) $\approx$ 3.2189 / 0.4055 $\approx$ 7.936. So log\_\{1.5\}(y) $\approx$ 7.936 - 1 = 6.936. Multiply by 1.5: $\approx$ 10.404. That's close to 10. Maybe with x slightly larger than 1.5? \\
\mbox{} \\
Wait, so with x = 1.5, the value is approximately 10.4, which is slightly above 10. Let me try x = 1.6. \texttt{[... solution continues ...]} \\
\mbox{} \\
Therefore, I think that's solid. I can't find any mistake in this reasoning. So I think the answer is 25. \\
\mbox{} \\
**Final Answer** \\
\textbackslash{}boxed\{25\} \\
</think> \\
\mbox{} \\
To solve the given problem, we are given two equations involving logarithms: \\
\mbox{} \\
\$\$ \\
\textbackslash{}log\_x(y\^{}x) = 10 \textbackslash{}quad \textbackslash{}text\{and\} \textbackslash{}quad \textbackslash{}log\_y(x\^{}\{4y\}) = 10 \\
\$\$ \\
\mbox{} \\
Our goal is to find the value of \$ xy \$, where \$ x \$ and \$ y \$ are both greater than 1. \\
\mbox{} \\
--- \\
\mbox{} \\
\#\#\# Step 1: Use the Power Rule of Logarithms \\
\mbox{} \\
\texttt{[... solution continues ...]} \\
Thus, the result is consistent across all derivations. \\
\mbox{} \\
--- \\
\mbox{} \\
\#\#\# Final Answer \\
\mbox{} \\
\$\$ \\
\textbackslash{}boxed\{25\} \\
\$\$

\end{minipage}

\caption{Base Qwen3-8B thinking-on rollout on an AIME24 problem generated with temperature $1.0$. The chat template opens a \texttt{<think>} block, the model reasons in the first person, tests several numerical values of $x$ before finding a one-line argument, and closes the block with a single matched \texttt{</think>} before writing a polished solution. Exactly one \texttt{<think>} and one \texttt{</think>} appear in the generation. The generation is $5,067$ tokens.}
\label{fig:base8bthinkon}
\end{figure}

\begin{figure}[h]
\centering
\begin{minipage}{0.96\linewidth}
\small
\setlength{\parindent}{0pt}
\setlength{\parskip}{0pt}

\vspace{0.5em}

Problem: There exist real numbers $x$ and $y$, both greater than 1, such that $\log_x\left(y^x\right)=\log_y\left(x^{4y}\right)=10$. Find $xy$.

\vspace{0.6em}
Model output:

\vspace{0.3em}
\ttfamily
Not sure what to do with this one. Maybe I can try to find the value of x and y? \\
\mbox{} \\
Okay, the problem says there exist real numbers \$ x \$ and \$ y \$, both greater than 1, such that: 
\mbox{} \\
\texttt{[... reasoning continues ...]} \\
\mbox{} \\
Wait, that seems straightforward. Did I do everything right? \\
Let me check again. \\
\mbox{} \\
\texttt{[... reasoning continues ...]} \\
\mbox{} \\
This is a transcendental equation and may not have an algebraic solution, but the problem says such real numbers exist, so it's sufficient that we found \$ xy = 25 \$, and the problem doesn't ask us to find specific x and y. \\
\mbox{} \\
Therefore, my conclusion is that \$ xy = 25 \$. \\
\mbox{} \\
So the answer should be \textbackslash{}boxed\{25\} \\
\mbox{} \\
**Final Answer** \\
\textbackslash{}boxed\{25\} \\
</think> \\
\mbox{} \\
To solve the problem, we are given two expressions involving logarithms: \\
\mbox{} \\
\$\$ \textbackslash{}log\_x(y\textasciicircum{}x) = \textbackslash{}log\_y(x\textasciicircum{}\{4y\}) = 10 \$\$ \\
\mbox{} \\
We are to determine the value of the product \$ xy \$, where both \$ x \$ and \$ y \$ are real numbers greater than 1. \\
\mbox{} \\
--- \\
\mbox{} \\
\#\#\# Step 1: Simplify the First Logarithmic Expression \\
\mbox{} \\
We begin by simplifying the logarithmic expression: \\
\mbox{} \\
\$\$ \textbackslash{}log\_x(y\textasciicircum{}x) \$\$ \\
\mbox{} \\
Using the logarithmic identity: \\
\mbox{} \\
\$\$ \textbackslash{}log\_b(a\textasciicircum{}n) = n \textbackslash{}cdot \textbackslash{}log\_b(a) \$\$ \\
\mbox{} \\
\texttt{[... solution continues ...]} \\
\mbox{} \\
\#\#\# Step 5: Final Answer \\
\mbox{} \\
We conclude that the product \$ xy \$ is: \\
\mbox{} \\
\$\$ \textbackslash{}boxed\{25\} \$\$

\end{minipage}

\caption{Qwen3-8B emits an orphan \texttt{</think>} token under thinking-off evaluation. The model was trained with OPSD using a thinking-mode teacher and matched privileged information z and this rollout was generated from checkpoint 160 on an AIME24 problem, taking $2,529$ tokens total. Although thinking is disabled (the chat template already supplies an empty \texttt{<think></think>} block; see \cref{app:qwenthink}), the response opens with an extended first-person reasoning trace, boxes an answer, emits a second, unmatched \texttt{</think>}, and only then writes a polished solution, reproducing the structure of a thinking-on response.}
\label{fig:orphantokenex}
\end{figure}

\FloatBarrier
\subsection{OPSD masking results} \label{app:opsdmasking}
To isolate LLM reasoning and grading improvements from unrelated distillation in, we do ablations for Qwen3-8B confining losses to the first $90\%$ of the student rollout and last $10\%$ of the student rollout.
\input{figures/tables/opsdmasking}

\FloatBarrier

\subsection{Gradient alignment results} \label{app:gradprobecont}
We report gradient cosine similarities for the remaining Qwen3 models in \cref{tab:gradcos_opsd_all} and for Olmo-3-7B-Instruct in \cref{tab:gradcos_opsd_olmo}, following \cref{app:gradprobe}. \Cref{tab:gns_math,tab:gns_lcb} report the gradient noise scale of every gradient.

\begin{table}[h]
  \caption{Cosine similarity between the gradient of each method trained on OpenThoughts and the GRPO$_{\text{nothink}}$ gradient on AIME24, AIME25, and HMMT25, measured at the base model (pooled over four seeds, $\pm$ paired jackknife error). Teacher thinking-on/off denotes whether thinking was enabled for the OPSD teacher, and $z$ denotes the reference solution shown to the teacher.}
  \label{tab:gradcos_opsd_all}
  \centering
  \setlength{\tabcolsep}{4pt}
  \begin{adjustbox}{center,max width=0.9\textwidth}
  \begin{tabular}{lcccc}
    \toprule
    \textbf{Method} & \textbf{Qwen3-1.7B} & \textbf{Qwen3-4B} & \textbf{Qwen3-8B} & \textbf{Qwen3-14B} \\
    \midrule
    \multicolumn{5}{l}{\textit{Baselines}} \\
    \quad GRPO$_{\text{think}}$ & $-0.008\pm0.013$ & $+0.008\pm0.017$ & $-0.014\pm0.019$ & $-0.004\pm0.027$ \\
    \quad GRPO$_{\text{nothink}}$ & $+0.018\pm0.047$ & $+0.060\pm0.050$ & $+0.118\pm0.050$ & $+0.166\pm0.029$ \\
    \quad SFT & $-0.153\pm0.026$ & $-0.056\pm0.021$ & $-0.075\pm0.031$ & $-0.056\pm0.015$ \\
    \midrule
    \multicolumn{5}{l}{\textit{OPSD (teacher thinking-on)}} \\
    \quad Ref $z$ & $-0.205\pm0.058$ & $-0.132\pm0.056$ & $-0.160\pm0.020$ & $-0.075\pm0.017$ \\
    \quad No $z$ & $-0.204\pm0.062$ & $-0.134\pm0.041$ & $-0.157\pm0.019$ & $-0.071\pm0.018$ \\
    \quad Shuffled $z$ & $-0.198\pm0.057$ & $-0.139\pm0.052$ & $-0.157\pm0.019$ & $-0.067\pm0.014$ \\
    \midrule
    \multicolumn{5}{l}{\textit{OPSD (teacher thinking-off)}} \\
    \quad Ref $z$ & $-0.185\pm0.058$ & $-0.086\pm0.056$ & $-0.163\pm0.013$ & $-0.090\pm0.023$ \\
    \quad No $z$ & $-0.087\pm0.068$ & $-0.145\pm0.064$ & $-0.172\pm0.018$ & $-0.087\pm0.025$ \\
    \quad Shuffled $z$ & $-0.157\pm0.060$ & $-0.076\pm0.053$ & $-0.163\pm0.011$ & $-0.093\pm0.021$ \\
    \bottomrule
  \end{tabular}
  \end{adjustbox}
\end{table}

\begin{table}[h]
  \caption{Cosine similarity between the gradient of each method trained on OpenThoughts and the GRPO gradient on AIME24, AIME25, and HMMT25 for Olmo-3-7B-Instruct, measured at the base model (pooled over four seeds, $\pm$ paired jackknife error). OPSD self-distill uses the frozen Olmo-3-7B-Instruct as the teacher, while OPSD cross-model uses Olmo-3-7B-Think as the teacher.}
  \label{tab:gradcos_opsd_olmo}
  \centering
  \setlength{\tabcolsep}{4pt}
  \begin{tabular}{lc}
    \toprule
    \textbf{Method} & \textbf{Cosine} \\
    \midrule
    \multicolumn{2}{l}{\textit{Baselines}} \\
    \quad GRPO & $+0.026\pm0.021$ \\
    \quad SFT & $-0.060\pm0.021$ \\
    \midrule
    \multicolumn{2}{l}{\textit{OPSD (self-distill)}} \\
    \quad Ref $z$ & $+0.057\pm0.052$ \\
    \quad No $z$ & $+0.051\pm0.047$ \\
    \quad Shuffled $z$ & $+0.062\pm0.054$ \\
    \midrule
    \multicolumn{2}{l}{\textit{OPSD (cross-model)}} \\
    \quad Ref $z$ & $+0.052\pm0.031$ \\
    \quad No $z$ & $+0.056\pm0.028$ \\
    \quad Shuffled $z$ & $+0.052\pm0.028$ \\
    \bottomrule
  \end{tabular}
\end{table}

\begin{table}[h]
  \caption{Gradient noise scale across four seeds for the math probe, where each seed's gradient is the mean over 64 iterations; $0$ is a noiseless estimate. The reference is GRPO$_{\text{nothink}}$ on AIME24, AIME25, and HMMT25. For Olmo-3-7B-Instruct, the teacher thinking-on and thinking-off rows are replaced by OPSD cross-model (Olmo-3-7B-Think teacher) and self-distill, respectively.}
  \label{tab:gns_math}
  \centering
  \setlength{\tabcolsep}{4pt}
  \begin{tabular}{lccccc}
    \toprule
    \textbf{Gradient} & \textbf{Qwen3-1.7B} & \textbf{Qwen3-4B} & \textbf{Qwen3-8B} & \textbf{Qwen3-14B} & \textbf{Olmo-3-7B-Instruct} \\
    \midrule
    Reference & $3.583$ & $3.436$ & $3.666$ & $3.736$ & $4.033$ \\
    \midrule
    \multicolumn{6}{l}{\textit{Baselines}} \\
    \quad GRPO$_{\text{think}}$ & $3.979$ & $4.108$ & $3.889$ & $3.786$ & -- \\
    \quad GRPO$_{\text{nothink}}$ & $3.653$ & $3.483$ & $3.322$ & $2.717$ & $3.951$ \\
    \quad SFT & $0.000$ & $0.000$ & $0.001$ & $0.001$ & $0.002$ \\
    \midrule
    \multicolumn{6}{l}{\textit{OPSD (teacher thinking-on / cross-model)}} \\
    \quad Ref $z$ & $0.001$ & $0.001$ & $0.002$ & $0.001$ & $0.008$ \\
    \quad No $z$ & $0.002$ & $0.002$ & $0.003$ & $0.002$ & $0.011$ \\
    \quad Shuffled $z$ & $0.001$ & $0.001$ & $0.001$ & $0.001$ & $0.010$ \\
    \midrule
    \multicolumn{6}{l}{\textit{OPSD (teacher thinking-off / self-distill)}} \\
    \quad Ref $z$ & $0.003$ & $0.004$ & $0.003$ & $0.004$ & $0.010$ \\
    \quad No $z$ & $0.010$ & $0.008$ & $0.010$ & $0.007$ & $0.017$ \\
    \quad Shuffled $z$ & $0.004$ & $0.004$ & $0.002$ & $0.006$ & $0.011$ \\
    \bottomrule
  \end{tabular}
\end{table}

\begin{table}[h]
  \caption{Gradient noise scale across four seeds for the LiveCodeBench v6 probe, where each seed's gradient is the mean over 8 iterations; $0$ is a noiseless estimate. The reference is GRPO on \textsc{test\_unseen}; GRPO and SDPO are trained on \textsc{train\_seen}.}
  \label{tab:gns_lcb}
  \centering
  \setlength{\tabcolsep}{4pt}
  \begin{tabular}{lccc}
    \toprule
    \textbf{Gradient} & \textbf{Qwen3-8B} & \textbf{Qwen3-14B} & \textbf{Olmo-3-7B-Instruct} \\
    \midrule
    Reference & $1.968$ & $3.152$ & $1.999$ \\
    GRPO & $2.855$ & $3.125$ & $2.076$ \\
    SDPO & $0.043$ & $0.018$ & $0.020$ \\
    \bottomrule
  \end{tabular}
\end{table}

\FloatBarrier

\section{Implementation details}\label{sec:implementation}
All our experiments are run were run with \texttt{8xH100} nodes (with no multi-node parallelism), primarily leveraging the vLLM inference engine \citep{vllm}. We run our experiments on top of the OPSD codebase written in TRL \citep{trl} and the SDPO codebase which uses veRL \citep{verl}. Our SDPO LCBv6 runs train for $\sim$40 hours, while the SDPO science and tool use jobs took $\sim$3 hours. OPSD runs took up to $\sim$2 hours and GRPO runs on the OpenThoughts dataset took up to $\sim$6 hours. OPSD thinking-on evaluations took up to $\sim$2.5 hours per dataset and OPSD thinking-off evaluations took up to $\sim$1 hour per dataset. Our gradient similarity analyses took up to $\sim$2 hours per training setting.

\subsection{SDPO hyperparameters}\label{subsec:sdpo_implementation}

\paragraph{LiveCodeBench v6.}
We summarize hyperparameters used for in the LCBv6 experiments in \cref{tab:sdpolcbhyperparams}, mirroring the hyperparameters taken from the SDPO setup~\citep{sdpo}. We use the exact same rich feedback settings and prompts as \citet{sdpo}, changing only the dataset splits.

\begin{table}[h]
  \centering
  \caption{SDPO and GRPO training hyperparameters on LiveCodeBench v6.}
  \setlength{\tabcolsep}{4pt}
    \begin{adjustbox}{center,max width=0.85\textwidth}
  \begin{tabular}{lcc}
    \toprule
    \textbf{Parameter} & \textbf{GRPO} & \textbf{SDPO} \\
    \midrule
    \multicolumn{3}{l}{\textit{General}} \\
    \quad Model & \{Qwen3-8/14B, Olmo-3-7B-Instruct\} & \{Qwen3-8/14B, Olmo-3-7B-Instruct\} \\
    \quad Thinking & False & False \\
    \quad Total steps & 200 & 200 \\

    \multicolumn{3}{l}{\textit{Data}} \\
    \quad Max. prompt length & 2048 & 2048 \\

    \multicolumn{3}{l}{\textit{Batching}} \\
    \quad Question batch size & 32 & 32 \\
    \quad Mini batch size & 8 & 1 \\
    \quad Number of rollouts & 8 & 8 \\

    \multicolumn{3}{l}{\textit{Rollout}} \\
    \quad Inference engine & vLLM & vLLM \\
    \quad Temperature & 1.0 & 1.0 \\
    \quad Max. response length & 8192 & 8192 \\

    \multicolumn{3}{l}{\textit{Validation}} \\
    \quad Number of rollouts & 4 & 4 \\
    \quad Temperature & 0.6 & 0.6 \\
    \quad Top-$p$ & 0.95 & 0.95 \\
    \quad Max. response length & 8192 & 8192 \\

    \multicolumn{3}{l}{\textit{SDPO loss}} \\
    \quad Top-$K$ distillation & - & 20 \\
    \quad Distillation divergence & - & Reverse-KL \\
    \quad Clip advantages & - & - \\
    \quad Teacher-EMA update rate & - & 0.01 \\
    \quad Rollout importance sampling clip & - & 2.0 \\

    \multicolumn{3}{l}{\textit{GRPO loss}} \\
    \quad $\epsilon$-high & 0.28 & - \\
    \quad Rollout importance sampling clip & 2.0 & - \\
    \quad KL coefficient ($\lambda$) & 0.0 & - \\

    \multicolumn{3}{l}{\textit{Training}} \\
    \quad Optimizer & AdamW & AdamW \\
    \quad Learning rate & $1\times 10^{-6}$ & $1\times 10^{-6}$ \\
    \quad Warmup steps & 0 & 0 \\
    \quad Weight decay & 0.01 & 0.01 \\
    \quad Gradient clip norm & 1.0 & 1.0 \\

    \bottomrule
  \end{tabular}\label{tab:sdpolcbhyperparams}
  \end{adjustbox}
\end{table}

\paragraph{SciKnowEval and ToolAlpaca.}
We also follow the SDPO and GRPO hyperparameters used in \citet{sdpo} for SciKnowEval and ToolAlpaca, except that we find that GRPO is unstable with learning rate $1\times 10^{-5}$, we report results with $1\times 10^{-6}$, running SDPO and GRPO in an on-policy manner. We summarize these hyperparameters in \cref{tab:sdposciknowevaltoolalpacahyperparams}.

\begin{table}[h]
  \centering
  \caption{GRPO and SDPO training hyperparameters on SciKnowEval and ToolAlpaca}
  \setlength{\tabcolsep}{4pt}
  \begin{tabular}{lcc}
    \toprule
    \textbf{Parameter} & \textbf{GRPO} & \textbf{SDPO} \\
    \midrule
    \multicolumn{3}{l}{\textit{General}} \\
    \quad Model & Qwen3-8B & Qwen3-8B \\
    \quad Thinking & False & False \\
    \quad Total steps & 200 & 200 \\

    \multicolumn{3}{l}{\textit{Data}} \\
    \quad Max. prompt length & 2048 & 2048 \\

    \multicolumn{3}{l}{\textit{Batching}} \\
    \quad Question batch size & 32 & 32 \\
    \quad Mini batch size &  32 & 32 \\
    \quad Number of rollouts & 8 & 8 \\

    \multicolumn{3}{l}{\textit{Rollout}} \\
    \quad Inference engine & vLLM & vLLM \\
    \quad Temperature & 1.0 & 1.0 \\
    \quad Max. response length & 8192 & 8192 \\

    \multicolumn{3}{l}{\textit{Validation}} \\
    \quad Number of rollouts & 16 & 16 \\
    \quad Temperature & 0.6 & 0.6 \\
    \quad Top-$p$ & 0.95 & 0.95 \\
    \quad Max. response length & 8192 & 8192 \\

    \multicolumn{3}{l}{\textit{SDPO loss}} \\
    \quad Top-$K$ distillation & - & 100 \\
    \quad Distillation divergence & - & Jensen-Shannon \\
    \quad Clip advantages & - & - \\
    \quad Teacher-EMA update rate & - & 0.05 \\
    \quad Rollout importance sampling clip & - & 2.0 \\

    \multicolumn{3}{l}{\textit{Training}} \\
    \quad Optimizer & AdamW & AdamW \\
    \quad Learning rate & $1\times 10^{-6}$ & $1\times 10^{-5}$ \\
    \quad Warmup steps & 0 & 0 \\
    \quad Weight decay & 0.01 & 0.01 \\
    \quad Gradient clip norm & 1.0 & 1.0 \\

    \bottomrule
  \end{tabular}\label{tab:sdposciknowevaltoolalpacahyperparams}
\end{table}

\subsection{OPSD hyperparameters}\label{app:opsdhyperparams}
We report our hyperparameters in \cref{tab:opsdhyperparams}; we follow the training configurations provided by \citet{opsd}, training on OpenThoughts \citep{openthoughts}, changing only the content of $z$. The one exception is training length: we run OPSD and SFT for 100 steps longer than \citet{opsd} to evaluate fully converged performance.

\begin{table}[h]
  \centering
  \caption{GRPO, OPSD, and SFT training hyperparameters on OpenThoughts. $^\dagger$Olmo-3-7B-Instruct has no thinking toggle, so thinking settings do not apply: it is trained and evaluated in its native mode, and its OPSD teacher is either the frozen student (self-distillation) or Olmo-3-7B-Think (cross-model).}
  \setlength{\tabcolsep}{4pt}
\begin{adjustbox}{center,max width=0.85\textwidth}
  \begin{tabular}{lccc}
    \toprule
    \textbf{Parameter} & \textbf{GRPO} & \textbf{OPSD} & \textbf{SFT} \\
    \midrule
    \multicolumn{4}{l}{\textit{General}} \\
    \quad Model & \multicolumn{3}{c}{Qwen3-\{1.7, 4, 8, 14\}B, Olmo-3-7B-Instruct} \\
    \quad Student-thinking & \{True, False\}$^\dagger$ & False$^\dagger$ & False$^\dagger$ \\
    \quad Teacher-thinking & - & \{True, False\}$^\dagger$ & - \\
    \quad Total steps & 500 & 200 & 200 \\

    \multicolumn{4}{l}{\textit{Data}} \\
    \quad Max. prompt length & 2048 & 2048 & 2048 \\

    \multicolumn{4}{l}{\textit{Batching}} \\
    \quad Question batch size & 4 & 32 & 32 \\
    \quad Number of rollouts & 8 & 1 & - \\
    \quad Optimizer steps per generation &  2 & 1  & - \\

    \multicolumn{4}{l}{\textit{Rollout}} \\
    \quad Inference engine & vLLM & vLLM & vLLM \\
    \quad Temperature & 1.2 & 1.1 & - \\
    \quad Top-$p$ & 1.0 & 0.95 & - \\
    \quad Top-$k$ & - & 20 & - \\
    \quad Max. response length & 16000 & 1024 & - \\

    \multicolumn{4}{l}{\textit{Validation}} \\
    \quad Student-thinking & \{True, False\}$^\dagger$ & \{True, False\}$^\dagger$ & \{True, False\}$^\dagger$\\
    \quad Number of rollouts & 12 & 12 & 12 \\
    \quad Temperature & 1.0 & 1.0 & 1.0 \\
    \quad Top-$p$ & \multicolumn{3}{c}{0.95 (thinking-on), 0.8 (thinking-off), 0.8 (Olmo-3-7B-Instruct)} \\
    \quad Top-$k$ & - & - & - \\
    \quad Max. response length & 38912 & 38912 & 38912 \\

    \multicolumn{4}{l}{\textit{OPSD loss}} \\
    \quad Teacher policy & - & Frozen student or Olmo-3-7B-Think$^\dagger$ & - \\
    \quad Teacher $z$ & - & \{ref $z$, no $z$, shuffled $z$\} & - \\
    \quad Distillation divergence & - & Forward KL & - \\
    \quad Token clip (Qwen3-1.7B, 4B) & - & 0.05 & - \\
    \quad Token clip (Qwen3-8B, 14B) & - & 0.06 & - \\
    \quad Token clip (Olmo-3-7B-Instruct) & - & 0.06 & - \\

    \multicolumn{4}{l}{\textit{Training}} \\
    \quad Optimizer & AdamW & AdamW & AdamW \\
    \quad Learning rate & $5\times 10^{-6}$ & $5\times 10^{-6}$ &  $5\times 10^{-6}$\\
    \quad Warmup steps & 0 & 0 & 0 \\
    \quad Weight decay & 0 & 0 & 0 \\
    \quad Gradient clip norm & 1.0 & 0.1 & 1.0\\
    \quad LoRA rank & 64 & 64 & 64 \\
    \quad LoRA alpha & 128 & 128 & 128 \\
    \bottomrule
  \end{tabular}\label{tab:opsdhyperparams}
\end{adjustbox}
\end{table}

\subsection{Gradient similarity analysis}\label{app:gradprobe}
We record each method's gradients by running its training loop from the base model with learning rate zero, so the weights never change, and averaging the gradients over all iterations of each run for downstream computation. We use the hyperparameters of \cref{tab:sdpolcbhyperparams,tab:opsdhyperparams}, with two exceptions: GRPO uses \texttt{num\_iterations}$=1$, since at learning rate zero a second pass over the same rollouts gives the same gradient, and we record gradients with respect to all parameters rather than the LoRA adapters (for the math results), since LoRA's zero-initialized matrices may lead to poorly defined gradient behavior. We run 64 iterations per seed for OPSD on OpenThoughts and 8 for SDPO on LiveCodeBench v6, with four seeds per method.

We define the following quantities for use in our analysis:

\textbf{Pooled cosine similarity.} Given $S=4$ seeds, let $\bar{g}_s$ denote the gradient of seed $s$ averaged over its iterations, and $\bar{g} = \frac{1}{S}\sum_{s=1}^{S} \bar{g}_s$ the pooled gradient. We report the cosine similarity between the pooled gradients of the reference and of each method:
\begin{equation}
    c = \frac{\bar{g}^{\mathrm{ref}}\cdot \bar{g}^{\mathrm{method}} }{\|\bar{g}^{\mathrm{ref}}\|_2 \, \|\bar{g}^{\mathrm{method}}\|_2}.
\end{equation}

\textbf{Jackknife error.} Let $c_{-s}$ denote the pooled cosine similarity computed with seed $s$ removed from both the reference and the method, and $\bar{c}$ the mean of the $c_{-s}$. The jackknife error is then
\begin{equation}
    \mathrm{SE} = \sqrt{\frac{S-1}{S}\sum_{s=1}^{S} \left(c_{-s}-\bar{c}\right)^2}.
\end{equation}

\textbf{Gradient noise scale.} Following \citet{gradientnoisescale}, we measure the noise of each pooled gradient as
\begin{equation}
    \frac{\operatorname{tr}(\Sigma)}{\|\bar{g}\|_2^2},
\end{equation}
where $\Sigma$ is the covariance of $\bar{g}_1,\dots,\bar{g}_S$. It is $0$ for a noiseless estimate.

\subsection{OPSD teacher and student prompts} \label{app:opsdprompts}
We use the standard prompts used by \citet{opsd}.

\begin{figure}[h]
\centering
\begin{minipage}{0.96\linewidth}
\small
\setlength{\parindent}{0pt}
\setlength{\parskip}{0pt}

\textbf{Student Prompt}

\vspace{0.25em}
\hrule
\vspace{0.5em}

\ttfamily
Problem: \{problem\}
\vspace{0.5em}

Please reason step by step, and put your final answer within
\textbackslash boxed\{\}.
\end{minipage}

\caption{Student prompt for OPSD.}
\label{fig:student-prompt}
\end{figure}

\begin{figure}[h]
\centering
\begin{minipage}{0.96\linewidth}
\small
\setlength{\parindent}{0pt}
\setlength{\parskip}{0pt}

\textbf{Privileged Teacher Prompt}

\vspace{0.25em}
\hrule
\vspace{0.5em}

\ttfamily
Problem: \{problem\}

\vspace{0.5em}
Here is a reference solution to this problem:\\
=== Reference Solution Begin ===\\
\{z\}\\
=== Reference Solution End ===

\vspace{1em}
After reading the reference solution above, make sure you truly
understand the reasoning behind each step---do not copy or paraphrase it. Now, using your own words and independent reasoning, derive the same final answer to the problem above. Think step by step, explore different approaches, and don't be afraid to backtrack or reconsider if something doesn't work out:

\vspace{0.5em}
Please reason step by step, and put your final answer within
\textbackslash boxed\{\}.
\end{minipage}

\caption{Teacher prompt template for OPSD. The privileged information ablations ``ref $z$", ``no $z$", and ``shuffled $z$" differ only by the content of what ``\{$z$\}" is replaced by in the template.}
\label{fig:teacher-prompt-ref}
\end{figure}

\end{document}

%% file: math_commands.tex
\usepackage{amsmath,amsfonts,bm}

\def\eqref#1{equation~\ref{#1}}

\def\1{\bm{1}}

\DeclareMathAlphabet{\mathsfit}{\encodingdefault}{\sfdefault}{m}{sl}
\SetMathAlphabet{\mathsfit}{bold}{\encodingdefault}{\sfdefault}{bx}{n}

\newcommand{\KL}{D_{\mathrm{KL}}}

%% file: figures/tables/Qwen3-1.7B_summary.tex
\begin{table}[h]
  \caption{Qwen3-1.7B average response length and percentage of rollouts with an orphan \texttt{</think>} tokens generated per method. Statistics are pooled over the rollouts generated from the existing AIME 24, AIME 25, and HMMT 25 evaluations at the best checkpoint when performance was averaged over four seeds. We report mean$\pm$std over seeds. Tok@best refers to the average response length in tokens of the best checkpoint, Tok@200 is the same quantity at checkpoint $200$, and Orph.\% refers to the fraction of thinking-off rollouts generated by the best checkpoint that emit a \texttt{</think>} with no opener of their own (we do not compute this for thinking-on as the model emits its own \texttt{<think>} and the closer is always matched).}
  \label{tab:qwen317b_summary}
  \centering
  \setlength{\tabcolsep}{4pt}
  \begin{adjustbox}{center,max width=\textwidth}
  \begin{tabular}{lccccc}
    \toprule
     & \multicolumn{2}{c}{\textbf{Thinking-on eval}} & \multicolumn{3}{c}{\textbf{Thinking-off eval}} \\
    \cmidrule(lr){2-3} \cmidrule(lr){4-6}
    \textbf{Method} & \textbf{Tok@200} & \textbf{Tok@best} & \textbf{Orph.\,\%} & \textbf{Tok@200} & \textbf{Tok@best} \\
    \midrule
    \multicolumn{6}{l}{\textit{Baselines}} \\
    \quad Base & 18413 & 18413 & 0.0 & 2889 & 2889 \\
    \quad GRPO$_{\text{think}}$ & 18220$\pm$1459 & 18356$\pm$743 & 0.0$\pm$0.0 & 2985$\pm$374 & 2827$\pm$145 \\
    \quad GRPO$_{\text{nothink}}$ & 19089$\pm$1475 & 19517$\pm$1849 & 0.0$\pm$0.0 & 3161$\pm$874 & 3187$\pm$580 \\
    \quad SFT & 14646$\pm$219 & 20023$\pm$160 & 48.4$\pm$1.5 & 15438$\pm$118 & 15438$\pm$118 \\
    \midrule
    \multicolumn{6}{l}{\textit{OPSD (teacher thinking-on)}} \\
    \quad Ref $z$ & 20526$\pm$224 & 21301$\pm$271 & 31.7$\pm$1.0 & 10602$\pm$157 & 10602$\pm$157 \\
    \quad No $z$ & 20991$\pm$356 & 20967$\pm$111 & 3.2$\pm$0.8 & 9887$\pm$978 & 9534$\pm$773 \\
    \quad Shuffled $z$ & 19821$\pm$288 & 19933$\pm$102 & 42.2$\pm$1.0 & 11397$\pm$422 & 11456$\pm$243 \\
    \midrule
    \multicolumn{6}{l}{\textit{OPSD (teacher thinking-off)}} \\
    \quad Ref $z$ & 18869$\pm$217 & 20026$\pm$166 & 0.3$\pm$0.2 & 3442$\pm$311 & 4008$\pm$104 \\
    \quad No $z$ & 18553$\pm$295 & 17374$\pm$165 & 0.0$\pm$0.0 & 4816$\pm$3008 & 2381$\pm$197 \\
    \quad Shuffled $z$ & 17619$\pm$45 & 21258$\pm$66 & 0.7$\pm$0.2 & 3121$\pm$208 & 5470$\pm$231 \\
    \bottomrule
  \end{tabular}
  \end{adjustbox}
\end{table}

%% file: figures/tables/Qwen3-4B_summary.tex
\begin{table}[h]
  \caption{Qwen3-4B average response length and orphan \texttt{</think>} tokens generated per method. Statistics are pooled over the rollouts generated from the existing AIME 24, AIME 25, and HMMT 25 evaluations at the best checkpoint when performance was averaged over four seeds. We report mean$\pm$std over seeds. Tok@best refers to the average response length in tokens of the best checkpoint and Tok@200 is the same quantity at checkpoint 200. Orph.\% refers to the fraction of rollouts generated by the best checkpoint with thinking disabled that emit a \texttt{</think>} with no opener of their own (we do not compute this for thinking-on as the model emits its own \texttt{<think>} and the closer is always matched).}
  \label{tab:qwen34b_summary}
  \centering
  \setlength{\tabcolsep}{4pt}
  \begin{adjustbox}{center,max width=\textwidth}
  \begin{tabular}{lccccc}
    \toprule
     & \multicolumn{2}{c}{\textbf{Thinking-on eval}} & \multicolumn{3}{c}{\textbf{Thinking-off eval}} \\
    \cmidrule(lr){2-3} \cmidrule(lr){4-6}
    \textbf{Method} & \textbf{Tok@200} & \textbf{Tok@best} & \textbf{Orph.\,\%} & \textbf{Tok@200} & \textbf{Tok@best} \\
    \midrule
    \multicolumn{6}{l}{\textit{Baselines}} \\
    \quad Base & 17556 & 17556 & 0.0 & 3719 & 3719 \\
    \quad GRPO$_{\text{think}}$ & 17742$\pm$197 & 17456$\pm$106 & 0.0$\pm$0.0 & 3604$\pm$170 & 3746$\pm$277 \\
    \quad GRPO$_{\text{nothink}}$ & 17855$\pm$350 & 18078$\pm$585 & 0.0$\pm$0.0 & 4348$\pm$652 & 4693$\pm$537 \\
    \quad SFT & 13871$\pm$153 & 18273$\pm$220 & 53.0$\pm$1.7 & 14028$\pm$70 & 14028$\pm$70 \\
    \midrule
    \multicolumn{6}{l}{\textit{OPSD (teacher thinking-on)}} \\
    \quad Ref $z$ & 19814$\pm$161 & 19832$\pm$129 & 64.1$\pm$1.6 & 16518$\pm$339 & 16445$\pm$308 \\
    \quad No $z$ & 20711$\pm$205 & 20310$\pm$193 & 20.3$\pm$3.2 & 18560$\pm$759 & 15733$\pm$502 \\
    \quad Shuffled $z$ & 19542$\pm$302 & 19789$\pm$51 & 69.9$\pm$2.6 & 16758$\pm$322 & 16732$\pm$126 \\
    \midrule
    \multicolumn{6}{l}{\textit{OPSD (teacher thinking-off)}} \\
    \quad Ref $z$ & 19666$\pm$161 & 19666$\pm$161 & 1.9$\pm$0.3 & 7138$\pm$209 & 7127$\pm$372 \\
    \quad No $z$ & 18161$\pm$207 & 17082$\pm$114 & 0.0$\pm$0.0 & 4621$\pm$177 & 3395$\pm$126 \\
    \quad Shuffled $z$ & 19368$\pm$145 & 19879$\pm$118 & 2.0$\pm$0.2 & 6670$\pm$135 & 6670$\pm$135 \\
    \bottomrule
  \end{tabular}
  \end{adjustbox}
\end{table}

%% file: figures/tables/Qwen3-8B_summary.tex
\begin{table}[h]
  \caption{Qwen3-8B average response length and orphan \texttt{</think>} tokens generated per method. Statistics are pooled over the rollouts generated from the existing AIME 24, AIME 25, and HMMT 25 evaluations at the best checkpoint when performance was averaged over four seeds. We report mean$\pm$std over seeds. Tok@best refers to the average response length in tokens of the best checkpoint and Tok@200 is the same quantity at checkpoint 200. Orph.\% refers to the fraction of rollouts generated by the best checkpoint with thinking disabled that emit a \texttt{</think>} with no opener of their own (we do not compute this for thinking-on as the model emits its own \texttt{<think>} and the closer is always matched).}
  \label{tab:qwen38b_summary}
  \centering
  \setlength{\tabcolsep}{4pt}
  \begin{adjustbox}{center,max width=\textwidth}
  \begin{tabular}{lccccc}
    \toprule
     & \multicolumn{2}{c}{\textbf{Thinking-on eval}} & \multicolumn{3}{c}{\textbf{Thinking-off eval}} \\
    \cmidrule(lr){2-3} \cmidrule(lr){4-6}
    \textbf{Method} & \textbf{Tok@200} & \textbf{Tok@best} & \textbf{Orph.\,\%} & \textbf{Tok@200} & \textbf{Tok@best} \\
    \midrule
    \multicolumn{6}{l}{\textit{Baselines}} \\
    \quad Base & 18264 & 18264 & 0.0 & 3580 & 3580 \\
    \quad GRPO$_{\text{think}}$ & 18209$\pm$484 & 18160$\pm$417 & 0.0$\pm$0.0 & 3762$\pm$123 & 3762$\pm$123 \\
    \quad GRPO$_{\text{nothink}}$ & 18453$\pm$203 & 18479$\pm$243 & 0.0$\pm$0.0 & 4116$\pm$367 & 4355$\pm$313 \\
    \quad SFT & 17807$\pm$86 & 18511$\pm$254 & 75.0$\pm$1.0 & 14488$\pm$143 & 14456$\pm$206 \\
    \midrule
    \multicolumn{6}{l}{\textit{OPSD (teacher thinking-on)}} \\
    \quad Ref $z$ & 20255$\pm$167 & 20188$\pm$131 & 30.6$\pm$2.5 & 12025$\pm$223 & 9750$\pm$384 \\
    \quad No $z$ & 20174$\pm$123 & 20284$\pm$189 & 40.1$\pm$3.5 & 15493$\pm$447 & 15493$\pm$447 \\
    \quad Shuffled $z$ & 20008$\pm$131 & 19669$\pm$524 & 25.2$\pm$6.3 & 13864$\pm$636 & 13864$\pm$636 \\
    \midrule
    \multicolumn{6}{l}{\textit{OPSD (teacher thinking-off)}} \\
    \quad Ref $z$ & 19460$\pm$151 & 19403$\pm$45 & 4.7$\pm$1.3 & 5870$\pm$110 & 5384$\pm$241 \\
    \quad No $z$ & 19003$\pm$146 & 19003$\pm$146 & 0.0$\pm$0.0 & 6736$\pm$544 & 5944$\pm$230 \\
    \quad Shuffled $z$ & 19385$\pm$157 & 19356$\pm$156 & 6.7$\pm$0.3 & 5710$\pm$252 & 9027$\pm$160 \\
    \bottomrule
  \end{tabular}
  \end{adjustbox}
\end{table}

%% file: figures/tables/Qwen3-14B_summary.tex
\begin{table}[h]
  \caption{Qwen3-14B average response length and orphan \texttt{</think>} tokens generated per method. Statistics are pooled over the rollouts generated from the existing AIME 24, AIME 25, and HMMT 25 evaluations at the best checkpoint when performance was averaged over four seeds. We report mean$\pm$std over seeds. Tok@best refers to the average response length in tokens of the best checkpoint and Tok@200 is the same quantity at checkpoint 200. Orph.\% refers to the fraction of rollouts generated by the best checkpoint with thinking disabled that emit a \texttt{</think>} with no opener of their own (we do not compute this for thinking-on as the model emits its own \texttt{<think>} and the closer is always matched).}
  \label{tab:qwen314b_summary}
  \centering
  \setlength{\tabcolsep}{4pt}
  \begin{adjustbox}{center,max width=\textwidth}
  \begin{tabular}{lccccc}
    \toprule
     & \multicolumn{2}{c}{\textbf{Thinking-on eval}} & \multicolumn{3}{c}{\textbf{Thinking-off eval}} \\
    \cmidrule(lr){2-3} \cmidrule(lr){4-6}
    \textbf{Method} & \textbf{Tok@200} & \textbf{Tok@best} & \textbf{Orph.\,\%} & \textbf{Tok@200} & \textbf{Tok@best} \\
    \midrule
    \multicolumn{6}{l}{\textit{Baselines}} \\
    \quad Base & 16757 & 16757 & 0.0 & 3341 & 3341 \\
    \quad GRPO$_{\text{think}}$ & 16435$\pm$223 & 16447$\pm$144 & 0.0$\pm$0.0 & 3389$\pm$142 & 3438$\pm$79 \\
    \quad GRPO$_{\text{nothink}}$ & 16638$\pm$200 & 16646$\pm$270 & 0.0$\pm$0.0 & 3500$\pm$357 & 3569$\pm$89 \\
    \quad SFT & 17061$\pm$138 & 17597$\pm$160 & 48.4$\pm$1.6 & 13145$\pm$117 & 13119$\pm$74 \\
    \midrule
    \multicolumn{6}{l}{\textit{OPSD (teacher thinking-on)}} \\
    \quad Ref $z$ & 17484$\pm$109 & 17525$\pm$128 & 55.7$\pm$10.4 & 13609$\pm$1780 & 13609$\pm$1780 \\
    \quad No $z$ & 18370$\pm$23 & 18179$\pm$202 & 67.7$\pm$6.2 & 20468$\pm$532 & 19456$\pm$631 \\
    \quad Shuffled $z$ & 17653$\pm$200 & 17794$\pm$142 & 64.6$\pm$17.2 & 16657$\pm$403 & 14496$\pm$832 \\
    \midrule
    \multicolumn{6}{l}{\textit{OPSD (teacher thinking-off)}} \\
    \quad Ref $z$ & 17449$\pm$125 & 17576$\pm$194 & 0.0$\pm$0.0 & 8608$\pm$411 & 5998$\pm$273 \\
    \quad No $z$ & 17460$\pm$139 & 17636$\pm$69 & 0.0$\pm$0.0 & 8522$\pm$1134 & 6526$\pm$791 \\
    \quad Shuffled $z$ & 17327$\pm$239 & 17292$\pm$282 & 0.0$\pm$0.0 & 7903$\pm$459 & 5653$\pm$407 \\
    \bottomrule
  \end{tabular}
  \end{adjustbox}
\end{table}

%% file: figures/tables/Olmo-3-7B-Instruct_summary.tex
\begin{table}[h]
  \caption{Olmo-3-7B-Instruct average response length and orphan \texttt{</think>} tokens generated per method. Statistics are pooled over the rollouts generated from the existing AIME 24, AIME 25, and HMMT 25 evaluations at the best checkpoint when performance was averaged over four seeds. We report mean$\pm$std over seeds. Tok@best refers to the average response length in tokens of the best checkpoint and Tok@200 is the same quantity at checkpoint 200. Orph.\% refers to the fraction of rollouts generated by the best checkpoint with thinking disabled that emit a \texttt{</think>} with no opener of their own. Olmo-3-7B-Instruct's chat template already ignores \texttt{enable\_thinking} (every variant renders an identical prompt with no tag prefilled, hence no thinking-on column), and its native format is to reason with no opener and then emit \texttt{</think>} before a clean final answer, so its base model already scores 63.0\% and emits \texttt{<think>} in 0.0\% of rollouts.}
  \label{tab:olmo37binstruct_summary}
  \centering
  \setlength{\tabcolsep}{4pt}
  \begin{adjustbox}{center,max width=\textwidth}
  \begin{tabular}{lccc}
    \toprule
     & \multicolumn{3}{c}{\textbf{Thinking-off eval}} \\
    \cmidrule(lr){2-4}
    \textbf{Method} & \textbf{Orph.\,\%} & \textbf{Tok@200} & \textbf{Tok@best} \\
    \midrule
    \multicolumn{4}{l}{\textit{Baselines}} \\
    \quad Base &  63.0 & 7732 & 7732 \\
    \quad GRPO &  63.7$\pm$1.7 & 7888$\pm$134 & 7915$\pm$214 \\
    \quad SFT &  69.3$\pm$1.3 & 8794$\pm$175 & 8646$\pm$71 \\
    \midrule
    \multicolumn{4}{l}{\textit{OPSD (self teacher)}} \\
    \quad Ref $z$ &  56.4$\pm$1.3 & 7501$\pm$131 & 8262$\pm$101 \\
    \quad No $z$ &  56.2$\pm$1.0 & 7266$\pm$110 & 8427$\pm$78 \\
    \quad Shuffled $z$ &  55.3$\pm$1.7 & 7897$\pm$121 & 8347$\pm$183 \\
    \midrule
    \multicolumn{4}{l}{\textit{OPSD (teacher Olmo-3-7B-Think)}} \\
    \quad Ref $z$ &  99.1$\pm$0.3 & 14239$\pm$84 & 14239$\pm$84 \\
    \quad No $z$ &  98.4$\pm$0.4 & 14745$\pm$24 & 14745$\pm$24 \\
    \quad Shuffled $z$ &  99.2$\pm$0.4 & 14019$\pm$178 & 14019$\pm$178 \\
    \bottomrule
  \end{tabular}
  \end{adjustbox}
\end{table}

%% file: figures/tables/opsdmasking.tex
\begin{table}[h]
  \label{tab:opsd8b_mask_sharedstep}
  \centering
  \setlength{\tabcolsep}{4pt}
  \begin{adjustbox}{center,max width=\textwidth}
  \begin{tabular}{lcccccccc}
    \toprule
     & \multicolumn{4}{c}{\textbf{Thinking-on eval}} & \multicolumn{4}{c}{\textbf{Thinking-off eval}} \\
    \cmidrule(lr){2-5} \cmidrule(lr){6-9}
    \textbf{Priv.\ info} & \textbf{AIME24} & \textbf{AIME25} & \textbf{HMMT25} & \textbf{Avg.} & \textbf{AIME24} & \textbf{AIME25} & \textbf{HMMT25} & \textbf{Avg.} \\
    \midrule
    \multicolumn{9}{l}{\textit{Last 10\% of tokens}} \\
    \quad Ref.\ $z$ & 77.2$\pm$0.3 & 69.9$\pm$1.5 & 46.9$\pm$1.3 & 64.7 & 33.6$\pm$1.7 & 25.1$\pm$1.6 & 15.1$\pm$0.8 & 24.6 \\
    \quad No $z$ & \textbf{77.8$\pm$1.4} & 69.3$\pm$1.4 & \textbf{48.1$\pm$1.2} & 65.1 & 34.8$\pm$0.9 & 28.4$\pm$1.6 & 16.9$\pm$1.0 & 26.7 \\
    \quad Shuffled $z$ & 76.6$\pm$1.0 & 71.9$\pm$0.9 & 47.6$\pm$1.0 & 65.4 & 31.9$\pm$0.9 & 23.8$\pm$1.6 & 16.2$\pm$2.4 & 24.0 \\
    \midrule
    \multicolumn{9}{l}{\textit{All but last 10\%}} \\
    \quad Ref.\ $z$ & 77.1$\pm$0.5 & 70.8$\pm$2.1 & 47.6$\pm$1.8 & 65.2 & 35.1$\pm$1.2 & 27.6$\pm$1.0 & 17.3$\pm$1.7 & 26.7 \\
    \quad No $z$ & 77.7$\pm$0.6 & \textbf{72.6$\pm$0.9} & 47.5$\pm$0.7 & \textbf{65.9} & \textbf{47.4$\pm$1.8} & \textbf{37.3$\pm$1.5} & \textbf{26.2$\pm$1.2} & \textbf{36.9} \\
    \quad Shuffled $z$ & 77.4$\pm$1.4 & 71.6$\pm$1.1 & 46.2$\pm$1.1 & 65.1 & 46.0$\pm$2.7 & 35.4$\pm$1.6 & 22.6$\pm$2.5 & 34.7 \\
    \bottomrule
  \end{tabular}
  \end{adjustbox}
  \caption{Masking ablation for Qwen3-8B OPSD training runs evaluated with \texttt{avg@12} on AIME24, AIME25, and HMMT25. Blocks refer to which parts of student tokens the distillation loss is applied to (the rest is masked out). Rows within a block are the privileged information $z$ shown to the teacher (the reference solution, one shuffled from another problem, or absent). Teacher thinking was on during OPSD training throughout. Each row contains the numbers aggregated over 4 seeds of a single checkpoint trained using the given privileged information. Accuracy is mean$\pm$std across those seeds. All settings and hyperparameters except masking follow \citep{opsd}.}
\end{table}